\newcommand{\bra}[1]{\left\langle #1\right|}
\newcommand{\ket}[1]{\left| #1\right\rangle}
\newcommand{\ketbra}[1]{\left|{#1}\rangle\!\langle{#1}\right|}
\newcommand{\tr}{\mathrm{tr}}
\newcommand{\id}{\mathrm{id}}
\renewcommand{\P}{\mathbb P}
\newcommand{\E}{\mathbf E}
\newcommand{\EV}{\mathbb E}
\newcommand{\Q}{\mathbb Q}
\newcommand{\A}{\mathcal A}
\newcommand{\B}{\mathcal B}
\newcommand{\X}{\mathcal X}
\newcommand{\Y}{\mathcal Y}
\newcommand{\M}{\mathcal M}
\newcommand{\sym}{\mathrm{sym}}
\newtheorem{definition}{Definition}
\newtheorem{theorem}{Theorem}
\newtheorem{lemma}{Lemma}
\begin{document}
\newcommand{\app}{Supplemental Material, which includes Refs.~\cite{bengtsson2006geometry,Christandl2007,Christandl2009a}}.

\title{Inductive supervised quantum learning}

\author{Alex Monr\`{a}s$^1$, Gael Sent\'{i}s$^2$, Peter Wittek$^{3,4}$}
\affiliation{
$^1$F\'isica Te\`orica: Informaci\'o i Fen\`omens Qu\`antics, Universitat Aut\`onoma de Barcelona, ES-08193 Bellaterra (Barcelona), Spain\\
$^2$Departamento de F\'isica Te\'orica e Historia de la Ciencia, Universidad del Pa\'{i}s Vasco UPV/EHU, E-48080 Bilbao, Spain\\
$^3$Naturwissenschaftlich-Technische Fakult\"at, Universit\"at Siegen, 57068 Siegen, Germany
$^4$ICFO-The Institute of Photonic Sciences, Castelldefels, E-08860 Spain\\
$^5$University of Bor{\aa}s, Bor{\aa}s, S-50190 Sweden
}

\begin{abstract} In supervised learning, an inductive learning algorithm extracts general rules from observed training instances, then the rules are applied to test instances.
We show that this splitting of training and application arises naturally, in the classical setting, from a simple independence requirement with a physical interpretation of being non-signalling.
Thus, two seemingly different definitions of inductive learning happen to coincide.
This follows from the properties of classical information that break down in the quantum setup.
We prove a quantum de Finetti theorem for quantum channels, which shows that in the quantum case, the equivalence holds in the asymptotic setting, that is, for large number of test instances.
This reveals a natural analogy between classical learning protocols and their quantum counterparts, justifying a similar treatment, and allowing to inquire about standard elements in computational learning theory, such as structural risk minimization and sample complexity.
\end{abstract}

\maketitle

Real-world problems often demand optimizing over massive amounts of data.
Machine learning algorithms are particularly well suited to deal with such daunting tasks: by mimicking a learning process, data is handled in a tractable way and approximately optimal solutions are inferred.
Quantum machine learning, an emergent line of research that introduces quantum resources in learning algorithms~\cite{wittek2014qml,Schuld2015introduction,adcock2015advances,biamonte2016quantum,arunchalam2017survey}, brings this pragmatic approach to quantum information processing, with a strong emphasis on speedup~\cite{aimeur2013quantumspeedup,rebentrost2014quantum,cai2015entanglement-based,lloyd2016quantumalgorithms}.
Quantum mechanics, however, also alters the limitations on what is physically possible in a classical learning setup, thus potentially changing the structure of learning algorithms at a fundamental level and opening a door for increasing performance.
In particular, handling quantum data collectively typically allows to outperform local approaches in many information processing tasks~\cite{Bennett1999,Giovannetti2001,Giovannetti2004,Niset2007,Sentis2014a,Sentis2016}.
Investigating the potential advantage of using quantum resources in learning algorithms crucially demands to establish the ultimate limits achievable within the framework of quantum machine learning.
This Letter tackles the question for general inductive supervised learning scenarios.% in a batch formulation. 

In machine learning, we are given a sample of a distribution called training instances~\cite{devroye1996probabilistic,hastie2008statisticallearning}. 
The training instances may have further fine structure, and we often think of them as pairs consisting of an input object and a matching output value or label: this is the scenario of supervised machine learning, where a classifying function is induced from the training instances and then used to assign labels to a number of unlabelled instances that we call test instances.
Not all forms of supervised learning are inductive: transductive learning refers to a problem in which labelled training instances are available, as well as unlabelled instances~\cite{gammerman1998learning}.
The task is to propagate the labels to the unlabelled ones, that is, we do not require inducing a function that we can use infinitely many times.
In this case, the geometry of both the labelled and unlabelled instances will influence the outcome.
At variance with supervised learning, in which the training occurs in a single step, reinforcement learning algorithms are trained on an instance basis with the possibility of changing the distribution by the subsequent querying, and the quantum generalization of the scenario has already been studied~\cite{dunjko2016quantumenhanced}.

In this Letter we develop a framework for inductive supervised quantum learning, that is, when training and test instances are given in a quantum form, and we contrast its structure with its classical analog.
We first show for the classical case that a natural independence requirement among test instances, i.e.
that the learning algorithm be non-signalling, induces the standard splitting of inductive learning algorithms into a training phase and a test phase.
We then prove that the same splitting holds asymptotically in the quantum case, despite having access to coherent collective quantum operations. 
In other words, we show that, in a fully quantum setting, the following three statements are equivalent in terms of performance in the asymptotic limit: (i) supervised learning algorithm learns a function which is applied to every test instance; (ii) supervised learning algorithm satisfies a non-signalling criterion; (iii) supervised learning scenario splits into separate training and test phases.

More formally, we derive a de Finetti theorem for non-signalling quantum channels and use it to prove that the performance of any quantum learning algorithm, under the restriction of being non-signalling, approaches that of a protocol that first measures the training instances and then infers labels on the test instances, in the limit of many test instances. Our result reveals a natural analogy between classical and quantum learning protocols, justifying a similar treatment that we have been taken for granted. Ultimately, the result provides a solid basis to generalize key concepts in statistical learning theory, such as structural risk minimization~\cite{vapnik1995nature}, to quantum scenarios.

\emph{Classical inductive learning}.
Consider a supervised learning problem characterized by an unknown joint probability distribution $P_{XY}$, where $X$ and $Y$ are random variables that model the test data and the label associated to it, respectively.
We denote its respective marginals by $P_X$ and $P_Y$.
We are given a finite set of i.i.d. unlabelled test instances $\{x_i\}_{i=1}^{n}$ and a set of correctly labelled examples called training set. 
The training set is generated by sampling the distribution $P_{XY}$, and we model it by the random variable $A=\{(X_1,Y_1),\ldots, (X_m,Y_m)\}$.
We are then set to solve the task of assigning a label $y_i$ to each test instance $x_i$, based on the information contained in the training set $A$.
We define a {\em learning protocol} that implements this task by a stochastic map $\P(y_1,\dots,y_n|A,x_1,\dots,x_n)$.

The natural figure of merit for assessing the performance of a learning protocol is the \emph{expected risk}, defined in terms of the conditional expected risk or average score per test instance $\E[\P|A]=
	\sum_{i}\mathbb E[s_{y_i,y'_i}\P(y_{1:n}|A,x_{1:n})]$, where $y'_i$ are the true labels, accessible to a referee for evaluation purposes, $s_{j,k}=1-\delta_{j,k}$, and we have introduced the short-hand notation $x_{1:n}=\{x_1,\ldots,x_n\}$ (likewise for $y$ and $y'$).
The expectation is in terms of variables $x_{1:n},y'_{1:n}$ over the distribution $P_{XY}$, i.e., for a generic function $g$, $\mathbb E[g(x,y,y')]=\sum_{x,y,y'}g(x,y,y')P_{XY}(x,y')$.
The expected risk is then defined as the average conditional expected risk over realizations $a$ of the training set, i.e.
$\E[\P]=\sum_a p_A(a)\E[\P|a]$.

It is convenient to define the marginal maps of $\P$:
\begin{align}
	\P_i(y_i|A,x_{1:n})\equiv
	\sum_{y_{1:i-1},y_{i+1:n}}
	\P(y_{1:n}|A,x_{1:n})\,.
\end{align}
We call a learning protocol \emph{inductive} if it satisfies the condition
\begin{align}  \label{eq:nonsignalling}
	\P_i(y_i|&A,x_{1:i-1},x_i,x_{i+1:n})=\\
\nonumber
	&\P_i(y_i|A,x'_{1:i-1},x_i,x'_{i+1:n})\,, \forall
	x'_{1:i-1}\,,\,x'_{i+1:n},
\end{align}
for all $i$,
namely, that $\P_i(y_i|A,x_{1:n})$ is actually independent of all the $X$ random variables but $X_i$, for all $i$. 
Eq.~\eqref{eq:nonsignalling} can be interpreted as a \emph{non-signalling} condition among the test instances as far as the learning protocol is concerned.
Note, however, that each marginal map $\P_i$ is still affected by the training set $A$.
This definition encompasses the standard assumption of inductive learning, where a classifier $f$ is extracted from the training set, and only $f$ determines the label to be assigned to each test instance.

In contraposition, consider a transductive learning scenario, where the topology of all of the unlabelled instances can have an impact on the assignment of any of the labels.
The independence condition in Eq.~(\ref{eq:nonsignalling}) is thus violated, hence a transductive protocol is potentially signalling.

The following lemma pinpoints the feature of classical inductive learning that is relevant for our goal.
In the next section we explore its extension to quantum settings.

\begin{lemma}\label{lem:inductive_classical}For every inductive learning protocol $\P$ that assigns labels $y_i$ to test instances $x_i$, there exists a set $F$ of classifying functions $f:X\rightarrow Y$ and stochastic maps $T(f|A)$, $Q(y|x,f)=\delta_{y,f(x)}$ such that the inductive protocol $\tilde \P$
\begin{align}
	\tilde \P(y_{1:n}|A,x_{1:n})=\sum_f  \left[\prod_{i=1}^n Q(y_i|x_i, f)\right] T(f|A)\nonumber
\end{align}
has expected risk $\E[\P|A]= \E[{\tilde \P}|A]$ for all $A$.
\end{lemma}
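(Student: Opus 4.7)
The plan exploits the fact that the expected risk is a sum of single-site loss terms, so it depends on $\P$ only through its marginals $\P_i$; once this is clear, $\tilde\P$ can be built as an explicit mixture of deterministic classifiers reproducing the (averaged) marginals.

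First I would unfold $\E[\P|A]$. Because $s_{y_i,y'_i}$ involves only the $i$-th label, summing $y_{1:n}$ against $\P$ contracts $\P$ to its marginal $\P_i$, while summing the remaining $y'_j$ and $x_j$ ($j\neq i$) against $\prod_j P_{XY}(x_j,y'_j)$ leaves only the factor $P_{XY}(x_i,y'_i)$. The non-signalling hypothesis~\eqref{eq:nonsignalling} then collapses $\P_i(y_i|A,x_{1:n})$ to a function $\P_i(y_i|A,x_i)$ of $x_i$ alone, giving
\begin{align*}
\E[\P|A]=\sum_i\sum_{x,y,y'}s_{y,y'}\,\P_i(y|A,x)\,P_{XY}(x,y')\,.
\end{align*}
Since every summand is an expectation over the same i.i.d.\ pair $(X,Y')$, each $\P_i$ can be replaced by the average $\bar\P(y|A,x):=\tfrac1n\sum_i\P_i(y|A,x)$ without altering the total, so $\E[\P|A]=n\,\EV_{(X,Y')}\bigl[\sum_{y}s_{y,Y'}\bar\P(y|A,X)\bigr]$.

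Next I would construct $T(\cdot|A)$ as a product distribution on $F=Y^X$, namely $T(f|A):=\prod_{x\in X}\bar\P(f(x)|A,x)$. With $Q(y|x,f)=\delta_{y,f(x)}$, the marginal of the prescribed $\tilde\P$ at site $i$ is $\tilde\P_i(y|A,x)=\sum_{f:f(x)=y}T(f|A)$; the factorised form of $T$ makes the other factors sum to unity, leaving exactly $\bar\P(y|A,x)$. Applying the same risk-reduction identity as above to $\tilde\P$, which is inductive and symmetric by construction, gives $\E[\tilde\P|A]=n\,\EV_{(X,Y')}\bigl[\sum_{y}s_{y,Y'}\bar\P(y|A,X)\bigr]=\E[\P|A]$, which is the claim.

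The argument is essentially bookkeeping. The only conceptual point is recognising that the $0/1$ loss $s_{j,k}$ makes the expected risk a linear functional of the single-site marginals, so realising those marginals through an independent product over $x$ is enough; a loss that coupled different test labels would obstruct this step. If $X$ is infinite, the product measure defining $T(\cdot|A)$ is still well-posed via the Kolmogorov extension theorem, so the construction carries over without modification. I therefore do not anticipate any genuine obstacle in writing up the proof.
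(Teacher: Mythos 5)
Your proposal is correct and follows essentially the same route as the paper's proof: both reduce the conditional risk to a linear functional of the permutation-averaged single-site marginal (your $\bar\P$ coincides with the paper's $\bar\P_1$), invoke non-signalling to turn that marginal into a stochastic map $X\to Y$, and realize it as a mixture of deterministic classifiers applied identically to every test instance. The only difference is cosmetic: you exhibit $T(f|A)$ explicitly as the product measure $\prod_{x}\bar\P(f(x)|A,x)$ on $F=Y^X$, whereas the paper appeals to the generic decomposition of a stochastic map into a convex combination of deterministic ones; both choices reproduce the same single-site marginals, which is all the risk functional depends on.
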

The proof can be found in~\footnote{See \app.}.
In words, this lemma shows that every conceivable inductive learning protocol can be regarded, with no effect on its performance, as a two-phase operation: a training phase (represented by the stochastic map $T$), where a classifier $f$ is extracted from the training set $A$, followed by a test phase (represented by $Q$), where $f$ is applied to each test instance $x_i$ and output labels $y_i$ are assigned.
The key ingredient behind its proof is that the risk is a symmetric function of the joint inputs and outputs, thus randomly permuting the inputs---and its corresponding outputs---does not affect the expected risk.
Under this randomization, the resulting protocol $\bar \P$ remains non-signalling, and thus applying the marginal protocol $\bar \P_{1|a}(y|x):=\bar \P_1(y|a,x)$ on each of the test instances will yield the same expected risk as the original protocol $\P$.
It is enlightening for our purpose to realize that all test instances are independently acted upon by maps $\bar\P_{1|a}$ that use the same sample of the training set $a$. 
As we show in the next section, this is the element of the proof that fails to hold in the quantum case.

On a fundamental note, consider the converse of Lemma~\ref{lem:inductive_classical}: any learning protocol which splits into a training phase and a test phase satisfies the non-signalling condition~\eqref{eq:nonsignalling}, so one can arguably think of the non-signalling condition as a definitory trait of inductive learning.
The advantage of this approach is many-fold.
On one hand, it allows one to focus on a much simpler set of features which fully characterize the performance of the protocol.
In addition, the training phase can be extended to provide further information relevant to assess, in advance of the test phase, the expected performance of the protocol.
This is the case in, e.g., structural risk minimization, where not only a function is chosen but also an estimator of the expected risk itself is provided~\cite{vapnik1995nature}, and confidence intervals are obtained.

\emph{Quantum learning}.
Quantum information cannot be cloned, hence the argument supporting Lemma \ref{lem:inductive_classical} breaks down.
However, it is possible to approximately clone
a quantum state,
and the quality of the clones will depend on how many copies must be produced, reaching an asymptotic limit in which each copy contains no more information than that which can be obtained by a single quantum measurement on the original system.
This idea is reflected in the seminal paper~\cite{Bae2006}, which asserts that asymptotic cloning is equivalent to state estimation succeeded by state preparation.
Intuitively, this principle hints at a plausible inductive strategy in a learning scenario where both the training set and all test instances are given as quantum states:
perform a quantum measurement $\mathcal M$ on the training set $A$, distribute the measurement outcome across all test instances $B_{1:n}$, and then use it to handle each test instance independently.
This approach has the property of being non-signalling by construction.
We will show that any symmetric non-signalling protocol can be well approximated by this strategy when the number of test instances is large.

In analogy to a classical learning problem, where an unknown probability distribution $p_{XY}$ must be mimicked by attaching appropriate labels to given random variables $X$, one may consider the most general quantum learning problem as the task of mimicking bipartite quantum states $\rho_{XY}$ by the action of a quantum channel on the marginal $\rho_X$.
The learning protocol can be thought of as a collective quantum channel $\Q$ which takes a training register $A$ and the set of test instance registers $X^{\otimes n}$ as inputs, and yields a corresponding set of output registers $Y^{\otimes n}$ (see Fig.~\ref{fig:Q} for an illustrative description of the setup).
\begin{figure}[t]
	\begin{center}
	\includegraphics[width=\columnwidth]{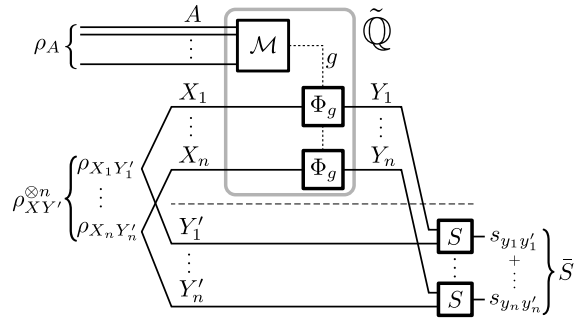}
	\end{center}
	\caption{Diagramatic representation of a generic quantum learning protocol $\Q$ (grey box), as per Definition~\ref{def:genericQ}, which be approximated by $\tilde{\Q}$, given in Eq.~\eqref{eq:Qtilde}.
Both setups take training and test instances $\rho_A$ and $\rho_{XY'}^{\otimes n}$ as inputs.
We distinguish two agents in the diagrams: the performer of the learning protocol or ``learner'', placed above the dashed horizontal line, and the ``referee'', placed below.
The learner sends the output registers $Y_{1:n}$ of the learning channel to the referee, who contrasts them with the registers $Y'_{1:n}$ and evaluates the average risk of the channel $\bar{S}$ (see Definition~\ref{def:risk_observable}).
This referee plays a role similar to the classical tester in the final phase of quantum-enhanced reinforcement learning~\cite{dunjko2016quantumenhanced}.
While the most general approach (a collective quantum channel $\Q$) in principle acts globally on all its inputs, 
its approximation $\tilde\Q$ comprises two separate phases: first (training phase) a measurement $\mathcal M$ is performed on the training set $\rho_A$, and second (test phase) the classical information $g$ obtained from the measurement is distributed among all test instances, and corresponding quantum channels $\Phi_g$ are applied locally to each one of them.}\label{fig:Q}
\end{figure}

\begin{definition}\label{def:genericQ}
A quantum learning protocol for a training set $A$ and $n$ quantum states $\rho_{XY}\in\X\otimes\Y$ is a multipartite quantum channel $\Q:\A\otimes \X_{1:n}\rightarrow \Y_{1:n}$.
A non-signalling quantum learning protocol is a quantum channel $\Q:\A\otimes \X_{1:n}\rightarrow \Y_{1:n}$ such that $\tr_{\Y_{1:i-1}\Y_{i+1:n}}[\Q(\rho_{AX_{1:n}})]$ is only a function of $\rho_{AX_{1:\hat i:n}}:=\tr_{\X_{1:i-1}\X_{i+1:n}}[\rho_{AX_{1:n}}]$, $\forall i$.
\end{definition}

This approach serves as a good starting point for generalizing several quantum learning problems, both discriminative and generative.
In particular, a quantum state classification problem may be expressed as $\rho_{XY}=\sum_{y} p_y \rho_X^{(y)}\otimes \ketbra y$, where register $X$ contains the quantum state, and $Y$ holds a classical label corresponding to the state in $X$.
This naturally encompasses the programmable quantum discriminator~\cite{Sentis2010,Sentis2012a}, but admits a much wider class of setups.
Another relevant approach is that of quantum state tomography, i.e., where a classical label $x$ is taken as a predictor for certain quantum states $\rho_Y^{(x)}$, thus $\rho_{XY}=\sum_x p_x \ketbra x\otimes\rho_Y^{(x)}$.
The task of the protocol is to learn from the training set each one of the quantum states and then produce a similar copy for each $X$ instance.
More 
generally, one could consider the task of generating genuine bipartite quantum states $\rho_{XY}$ starting from their reduced states $\rho_X$\footnote{All these tasks are meant within the framework described in Fig. \ref{fig:Q}, and therefore their performance is assessed only through registers $\Y$ and $\Y'$. In the fully quantum case of generating bipartite states $\rho_{XY}$ from $\rho_X$, this means that a risk observable would measure the dissimilarity between the states $\Q(\tr_{Y'}\rho_{XY'})$ and $\tr_X \rho_{XY'}$, instead of comparing (perhaps more naturally in other contexts) the produced bipartite state with a copy of the target bipartite state saved as reference.}.

\begin{definition}\label{def:risk_observable}
Given a \emph{risk} observable $S\in \Y\otimes\Y'$, the expected risk of protocol $\Q$ is the expectation value of the symmetrized risk observable $\bar S\in(\Y\otimes\Y')_{1:n}$ on the output of the channel $\Q$, $\E[\Q]=\tr[\Q\otimes \id_{\Y'_{1:n}}\left(\rho_A\otimes (\rho_{XY'})^{\otimes n}\right)\bar S]$.
\end{definition}

For every quantum protocol $\Q$ we can define the symmetrized protocol $\bar \Q=\frac{1}{n!}\sum_{\sigma\in S_n} {\Pi^{(\Y)}_\sigma}^\dagger \circ \Q \circ (\id_\A\otimes \Pi^{(\X)}_\sigma)$.
$\bar\Q$ is non-signalling if $\Q$ is.
In analogy with the classical case, a non-signalling quantum channel $\Q:\A\otimes\X^{\otimes n}\rightarrow\Y^{\otimes n}$ naturally admits a notion of \emph{marginalization}, thereby inducing channels for a reduced number of registers, $k\leq n$, $\Q_k:\A\otimes\X^{\otimes k}\rightarrow\Y^{\otimes k}$
(Lemma~1
in~\cite{Note1}).
Then, the expected risk can be expressed in terms of $\bar \Q_1$, $\E[\Q]=\tr[\bar \Q_1\otimes \id_{\Y'}(\rho_A\otimes\rho_{XY'})S_{YY'}]$, or the conditional channel \mbox{$\bar\Q_{1|\rho_A}:\rho_X\mapsto\bar \Q_1(\rho_A\otimes \rho_X)$}.

It is clear that the line of reasoning so far is a simple reformulation of the ideas involved in the classical arguments.
If one could implement the protocol $\bar\Q_1$ on each of the test instances then one could perform with an average performance $\E[\Q]$.
At this point, however, we encounter the fundamental roadblock that motivates this work.
The map $\rho_A\otimes \rho_{X_1}\otimes\cdots\otimes \rho_{X_n}\longmapsto \bar \Q_{1|\rho_A}(\rho_{X_1})\otimes\cdots\otimes \bar \Q_{1|\rho_A}(\rho_{X_n})$ is non-linear in $\rho_A$, so it does not reflect a physically realizable transformation.
This reflects the non-clonable nature of quantum information~\cite{Dieks1982,Wootters1982}:
it is the impossibility of cloning the training set which prevents the simultaneous application of the map $\bar \Q_{1|\rho_A}$ on the $n$ test instances.
Therefore, a generic quantum channel $\Q:\A\otimes \X_{1:n}\rightarrow\Y_{1:n}$ that distributes symetrically the system $A$ across $n$ identical parties $X$, can, at best, perform some sort of approximate cloning, which then is acted upon independently and symmetrically.
Since, asymptotically, this cloning operation becomes a measure-and-prepare process, we can reduce it to a measurement on the training set, and consider the preparation of the \emph{clones} as part of the task to be performed on each test instance.
This argument can be made formal as a de Finetti-type theorem for quantum channels~\footnote{We note that a de Finetti theorem for fully symmetric quantum channels can be found in the literature~\cite{fawzi_quantum_2015}.}.
To conclude this section, we make our statement rigorous.

\begin{theorem}[Main result]\label{thm:mainresult} Let $\Q:\A\otimes\X_{1:n}\rightarrow\Y_{1:n}$ be a non-signalling quantum channel, and let $S\in\Y\otimes\Y'$ be a local operator.
Then, there exists a POVM $\mathcal M(dg)$ on $\A$ and a set of quantum channels $\Phi_g:\X\rightarrow\Y$ such that the quantum channel $\tilde\Q$,
\begin{equation}\label{eq:Qtilde}
		\tilde{\Q}=\int \hat{\mathcal M}(dg)\otimes \Phi_g^{\otimes n},
\end{equation}
satisfies 
$\left|\E[\Q]-\E[\tilde \Q]\right|\leq \frac{\kappa}{n^{1/6}}+\left(O\frac{1}{n^{1/3}}\right),$
where the coefficient $\kappa$ depends on the dimensions of the spaces $A$, $X$ and $Y$.
\end{theorem}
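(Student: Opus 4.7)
The plan is to derive the theorem from a quantum de Finetti argument applied to the Choi--Jamio{\l}kowski representation of $\bar\Q$. First I would symmetrize $\Q\mapsto\bar\Q$ under the $S_n$ action that jointly permutes the input registers $\X_{1:n}$ and the output registers $\Y_{1:n}$. Since $\rho_A\otimes\rho_{XY'}^{\otimes n}$ and $\bar S$ are both permutation invariant, this preserves the expected risk, and the identity $\E[\bar\Q]=\tr[(\bar\Q_1\otimes\id_{\Y'})(\rho_A\otimes\rho_{XY'})\,S]$ recorded in the text reduces the problem to showing that the single-register marginal $\bar\Q_1:\A\otimes\X\to\Y$ can be replaced by the corresponding marginal of a $\tilde\Q$ of the stated product form. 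The $\Phi_g^{\otimes n}$ structure on the full $n$-register channel then follows for free, because only this first marginal enters the risk and any completion by further tensor copies of a CPTP $\Phi_g$ is itself CPTP.

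Next I would pass to the Choi state $J(\bar\Q)\in\A\otimes(\X\otimes\Y)^{\otimes n}$, which is $S_n$-invariant on the paired subsystems $\X_i\otimes\Y_i$; non-signalling forces each $\X_i$-marginal to be maximally mixed and uncorrelated with $\A$ and with the remaining $(\X\otimes\Y)_{\neq i}$. To this symmetric state with quantum side register $\A$ I would apply a measured quantum de Finetti theorem, in the spirit of Brand\~{a}o--Harrow: an informationally complete POVM on an auxiliary block of $m$ of the paired subsystems returns a classical outcome $g$ such that, on average in trace distance, the remaining $n-m$ subsystems together with $\A$ are close to a product $\tau_g^\A\otimes\sigma_g^{\otimes(n-m)}$. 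Inverting the Choi isomorphism turns this decomposition into a POVM $\mathcal M(dg)$ acting on $\A$ alone: feeding $\rho_A$ into $\bar\Q$ is precisely the $\A$-contraction of $J(\bar\Q)$ against $\rho_A$, so the outcome probabilities pull back to $\tr[\rho_A E_g]$ with $E_g\propto(\tau_g^\A)^{\top}$. The $\X$-marginal of each conditional $\sigma_g$ is near-maximally mixed by non-signalling plus the de Finetti bound, so rounding $\sigma_g$ to the nearest valid channel Choi state defines $\Phi_g:\X\to\Y$.

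The $n^{-1/6}$ rate then emerges from balancing two error contributions: a measurement-resolution term of order $1/\sqrt{m}$ from estimating the parameters of $\sigma_g$ from $m$ samples, against a de Finetti truncation term of order $m/\sqrt{n}$ arising from the quantum side register $\A$, which weakens the usual $O(m/n)$ bound of Christandl et al.\ to a square-root form. Optimising at $m\sim n^{1/3}$ yields the stated $\kappa/n^{1/6}$ leading error, with the additive $O(1/n^{1/3})$ correction absorbing the CPTP-rounding $\sigma_g\mapsto\Phi_g$ and the dimensional overheads that constitute $\kappa$. I expect the main obstacle to be precisely this de Finetti step with a quantum side register: sharpening it so that the extraction of a valid CPTP $\Phi_g$ holds outcome-by-outcome rather than only on average, while tracking the explicit dependence on $\dim\A,\dim\X,\dim\Y$, is the delicate core of the argument. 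The symmetrization, the Choi/POVM bookkeeping, and the final single-variable optimization over $m$ are essentially routine.
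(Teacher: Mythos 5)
Your overall architecture (symmetrize, use non-signalling to reduce the risk to the single-register marginal $\bar\Q_1$, pass to the Choi state, apply a de Finetti theorem with a quantum side register $\A$, invert the decomposition into a POVM on $\A$ and local channels $\Phi_g$) matches the paper's. But there are two substantive divergences, one of which is a genuine gap. First, the de Finetti input is different and your worry about it is misplaced: the paper applies the Christandl--K\"onig--Mitchison--Renner theorem directly to the Choi state $\omega_{A(XY)_{1:n}}$ in the form with an \emph{operator-valued} measure $M(dg)$ on $\A$, which already handles the quantum side register exactly, with error $4d^2k/n$ independent of $\dim\A$; there is no weakening to an $m/\sqrt{n}$ form, no informationally complete measurement on a sacrificed block of $m$ registers, and hence no $1/\sqrt{m}$ versus $m/\sqrt{n}$ trade-off. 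Your proposed balancing at $m\sim n^{1/3}$ is therefore not where the $n^{-1/6}$ rate comes from.

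The genuine gap is the step you explicitly defer: converting the conditional states $\phi_g$ into valid (trace-preserving) channels $\Phi_g$ outcome-by-outcome. You describe this as ``rounding $\sigma_g$ to the nearest valid channel Choi state'' and budget it as an $O(n^{-1/3})$ correction, but in the paper this step is the dominant source of error and is precisely what produces the $n^{-1/6}$ rate. The mechanism is: (i) an operator Chebyshev inequality shows that the marginals $\tau_g=\tr_\Y[\phi_g]$ concentrate around $\openone_\X/d_X$, so the set $\bar R_\epsilon$ of outcomes with $\|\tau_g-\EV_1[G]\|_\infty\geq\epsilon$ has measure $O(\delta/\epsilon^2)$ with $\delta=4(d_Xd_Y)^2/n$; (ii) on the good set one renormalizes $\phi_g\mapsto\tilde\phi_g=\tfrac{1}{d_X}(\tau_g^{-1/2}\otimes\openone)\phi_g(\tau_g^{-1/2}\otimes\openone)$ and a fidelity computation bounds $\|\phi_g-\tilde\phi_g\|_1\leq\sqrt{d_X}\sqrt{\epsilon+\delta}$; (iii) optimizing $\sqrt{\epsilon}$ against $\delta/\epsilon^2$ gives $\epsilon=\delta^{1/3}$ and a total error of order $\delta^{1/6}\sim n^{-1/6}$. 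Without some concentration-plus-renormalization argument of this kind, ``rounding to the nearest channel'' has no quantitative content: a single outcome $g$ with $\tau_g$ far from $\openone_\X/d_X$ could in principle carry non-negligible weight, and nothing in the de Finetti bound alone rules this out. You correctly identified this as the delicate core, but a proof must supply it rather than absorb it into lower-order terms.
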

The main ingredient behind this theorem is the quantum de Finetti theorem for quantum states, which can be found in~\cite{Christandl2007}.
We refer the reader to~\cite{Note1} for a detailed derivation.

Theorem~\ref{thm:mainresult} shows that, for any local operator $S$, its symmetrized expectation under the action of a non-signalling quantum channel $\Q$ can be approximated by a one-way quantum channel $\tilde \Q$ of local operations and classical communication (LOCC).
This channel amounts to performing a measurement $\mathcal M(dg)$ yielding outcome $g$ over the training set, and applying simultaneously $\Phi_g$ on each of the test instances (see Fig.~\ref{fig:Q}).
The resulting performance of both protocols, as measured by their expected risk, converge to each other as $n$ tends to infinity.

\emph{Discussion}.
The main result reported in this paper, Theorem \ref{thm:mainresult}, is a natural consequence of the symmetry implicit in the problem.
Given the fact that the performance on a multiple-instance inductive learning task is symmetric under simultaneous exchange of the test/answer pairs, a randomized permutation of the test instances will yield the same average performance.
Therefore, each protocol performs equally well as its randomized permutation protocol.
We have used this symmetry and the fact that the quantum information contained in the training set cannot be perfectly distributed over an arbitrarily large number of parties, to show that any such protocol must, effectively, be well approximated by first performing a measurement over the training set, and then distributing the outcome.

Previous works have already dwelt on this issue, that is, the contrast between coherent quantum operations and separate training and test phases for learning tasks, in various specific scenarios.
Examples are a quantum pattern matching algorithm~\cite{Sasaki2002}, quantum learning of unitary operations~\cite{Bisio2010}, and quantum learning for state classification~\cite{Sentis2012a,Sentis2014a}.
It is worth stressing that, whereas the results so far have been case-specific, we approach the problem from a very general standpoint, allowing us to discuss the broad class of inductive quantum learning protocols within a common framework.

The simplification of general quantum protocols to schemes that use LOCC has several relevant implications.
As quantum information technologies advance, coherent collective manipulation of quantum information will become accessible on a practical scale.
Nevertheless, the demonstration of a scalable, general-purpose, quantum computer is still beyond the foreseeable future.
For this reason, reducing collective approaches to simpler, local ones is of utmost importance.
With the result reported in this paper, the degree of coherence required for implementing several inductive quantum learning protocols is greatly reduced, from requiring joint coherent manipulation of both the training set and all test instances, to only the training set.

\emph{Outlook}.
Designing quantum algorithms to learn from quantum information
poses a serious challenge.
Analytical results are scarce, and numerical computations quickly become intractable.
A prominent example is quantum state discrimination, which has no known closed-form solution in general scenarios~\cite{Nakahira2015}, and only highly symmetric cases are exactly solvable~\cite{Chiribella2004}.
Reducing a generic quantum learning protocol to a single-instance one-way LOCC protocol greatly simplifies the task.
We expect that our result will allow to derive performance bounds for a variety of relevant quantum learning tasks.
Also, our quantitative approximation bounds allow for single-copy algorithms to be used as benchmarks for coherent multi-instance ones.

Another benefit of this reduction is the ability to access, without disturbance, the state of the learner 
in between the training and test phases.
This information is essential for several machine learning tasks.
For structural risk minimization~\cite{vapnik1995nature}, one uses an estimate of the expected risk, produced by evaluating the performance of a given classifier on the training set.
In the quantum setup, this approach is not directly applicable.
As the training set can only be accessed once, one can either extract information to determine the best classifier, or to assess the performance of one given classifier.
However, both tasks will generally be incompatible.
Therefore, a ``quantum black box'' -- e.g.
a fully quantum processor that takes all the inputs (training and tests) -- will, despite being the most general approach, provide only the required answers.
It is unclear how one can adapt a generic quantum black box to provide an assessment of its own performance.
Our result, nevertheless, opens the door to assessing the performance of any classifier by suitably processing the intermediate measurement outcome $g$.
We expect the result reported here will shed light on the potential and limitations of learning from quantum sources, and ultimately serve as a starting ground for developing a fully quantum theory of risk bounds in statistical learning.

A few comments on the degree of generality of our result are in order.
The convergence rate of our approximation is potentially not tight, and we expect better bounds to be achievable.
For simplicity, the approach presented here uses the operator form of Chebyshev inequality (Lemma~5 
in~\cite{Note1}), which ultimately hinders us from obtaining a better bound.
We expect a more detailed study will yield better approximations.
More importantly, our result can be extended in various ways.
A potentially very relevant practical problem is to learn quantum operations rather than \emph{states}.
This, however, can be easily addressed within the Choi matrix formalism.
A related result for learning quantum unitary operations already shows the same splitting reported here~\cite{Bisio2010}.
Indeed, the formalism of quantum combs~\cite{Chiribella2008} provides the theoretical framework for this extension, but, essentially, the most general such process will also be described by a suitable multipartite quantum system $\omega_{AB_{1:n}}$, where $A$ will now consist of input and output ports, and the maps $\Phi_g$ will be potential implementations of the learned operations.

The authors wish to thank useful discussions with Andreas Winter and Giulio Chiribella. We are grateful to the anonymous referees who provided extensive feedback that greatly improved the paper.
A.M.
is supported by the ERC (Advanced Grant IRQUAT, project number ERC-267386).
G.S.
is supported by the Spanish MINECO (project FIS2015-67161-P) and the ERC (Starting Grant 258647/GEDENTQOPT).
P.W.
acknowledges financial support from the ERC (Consolidator Grant QITBOX), MINECO (Severo Ochoa grant SEV-2015-0522 and FOQUS), Generalitat de Catalunya (SGR 875), and Fundaci\'o Privada Cellex.

\clearpage
\onecolumngrid
\appendix
\setcounter{theorem}{0}
\setcounter{lemma}{0}

\section{Supplemental material: Proof of our main result}
Our main result, Theorem~\ref{sm:thm:mainresult} (Theorem~1 in the main text), consists in showing that for every non-signalling CPTP map $\Q:\A\otimes\X_{1:n}\rightarrow\Y_{1:n}$ there is a symmetric one-way LOCC map $\Q:\A\otimes\X_{1:n}\rightarrow\bar\Y_{1:n}$ that approximately reproduces all local expectation values, and is non-signalling by construction.
The backbone of our result is the quantum de Finetti theorem, specifically in its form as it appears in \cite{Christandl2007}, which we restate here: 

\begin{theorem}[Quantum de Finetti theorem \cite{Christandl2007}]
	\label{sm:thm:definetti} Let $A$ and $B$ be quantum systems and let $\omega_{AB_{1:k}}$ be a symmetric quantum state under exchange of the $B$ systems.
	If $\omega_{AB_{1:k}}$ admits a symmetric extension $\omega_{AB_{1:n}}$ then there is a set $G$, a
	%positive operator-valued measure
	POVM $\M(dg)$ over $G$ on $A$, and a map $\phi:G\rightarrow\mathcal\B$ such that 
	\begin{align}
	%	\left\|\omega_{AB_{1:k}}- \int M(dg)\otimes \phi_g{}^{\otimes k}\right\|_1\leq\frac{4d^2k}{n}
	\left\|\omega_{AB_{1:k}}- \int \frac{\M(dg)}{d_A}\otimes \phi_g{}^{\otimes k}\right\|_1\leq\frac{4d^2k}{n}
	\end{align}
	where $d=\dim(\B)$, $d_A=\dim(\A)$, $\phi_g\geq0,\,\forall g\in G$,
	%$M(dg)$
	$\M(dg)$
	only depends on the $n$-extension $\omega_{AB_{1:n}}$ and, in particular, is independent of $k$.
	$\|X\|_1=\tr|X|$ denotes the trace-norm of operator $X$.
	In general, one can take $G={\rm SU}(d^2)$.
	G and the accuracy of the approximation is independent of the dimension of $A$.
\end{theorem}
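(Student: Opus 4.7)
The plan is to follow the postselection-based approach of Christandl, K\"onig, Mitchison, and Renner. The POVM $\M(dg)$ on $A$ will be constructed \emph{universally} — independent of the specific state $\omega$ — as a covariant measurement with outcomes in $G = \mathrm{SU}(d^2)$, and the positive operators $\phi_g$ on $\B$ will be the conditional post-measurement $B$-states that go with this POVM.

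\textbf{Exploiting the symmetric extension.} The crucial structural input is that, since $\omega_{AB_{1:k}}$ extends to an $n$-symmetric state $\omega_{AB_{1:n}}$, the collective state on the $B$-registers is highly constrained: the effective support of permutation-symmetric states lies in the symmetric subspace $\mathrm{sym}^n(\B)$, whose dimension $\binom{n+d^2-1}{d^2-1}$ grows only polynomially in $n$ rather than exponentially as $d^n$. This polynomial scaling is what ultimately yields the $d^2 k/n$ denominator.

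\textbf{Constructing $\M(dg)$ and $\phi_g$.} For each $g \in \mathrm{SU}(d^2)$ I would define a POVM element on $A$ covariantly with respect to the group action on a $d^2$-dimensional reference space (obtained by a Naimark dilation of $A$ if $d_A < d^2$), so that the outcome $g$ effectively labels a pure state on the reference. The operator $\phi_g$ is then read off as the conditional $B$-marginal given outcome $g$. Because the construction depends only on $d$ and not on $d_A$, both $G$ and the accuracy of the approximation remain independent of $\dim(A)$.

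\textbf{Trace-distance bound.} The estimate $4 d^2 k / n$ is obtained by combining (i) an operator Chebyshev/Chernoff inequality on the symmetric subspace, which bounds the distance between any $n$-extendible permutation-symmetric state on $\B^{\otimes k}$ and a convex mixture of product states by $O(d^2 k / n)$; (ii) Uhlmann's theorem, which lifts the resulting $B$-marginal approximation to a joint approximation on $AB_{1:k}$ without enlarging the trace distance; and (iii) the Fuchs--van~de~Graaf inequality, which converts fidelity bounds into trace-norm bounds and accounts for the constant factor $4$.

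\textbf{Main obstacle.} The most delicate aspect is the \emph{independence from $\dim(A)$}: a naive estimate would pick up factors of $d_A$ because purifications and extensions onto $A$ can be arbitrarily large. This is circumvented by exploiting the fact that the $A$-side of any purification is determined only up to a local isometry, which leaves all $B$-marginals invariant; this freedom is exactly what permits the POVM to be factored through a fixed $d^2$-dimensional reference space. Keeping careful track of the constants through this decoupling — rather than proving a dimension-dependent estimate and attempting to optimize afterwards — is the principal technical challenge, and is precisely the feature that distinguishes the Christandl et al.\ postselection argument from earlier de~Finetti proofs.
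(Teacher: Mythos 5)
Your proposal does not follow the paper's argument, and several of its load-bearing steps are misdirected or would not yield the stated bound. The construction is inverted: in the paper's proof (reproducing Christandl, K\"onig, Mitchison and Renner), the universal, state-independent object is the coherent-state POVM $E_{n-k}^g=\dim\mathcal H_{\mathrm{sym}(n-k)}\,\phi_g^{\otimes(n-k)}$ acting on the \emph{last $n-k$ copies of $B$}; the operators $\phi_g=U_g|0\rangle\!\langle 0|U_g^\dagger$ are fixed coherent states (the POVM seeds), not conditional post-measurement marginals, and the measure on $A$, namely $M(dg)=\tr_{B_{1:n}}[(\openone_\A\otimes E_n^g)\,\Psi_{AB_{1:n}}]\,dg$, is manifestly state-dependent --- the theorem statement itself says it depends on the $n$-extension, so your plan to make $\M(dg)$ universal contradicts the claim you are proving. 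Likewise, the $d^2$ and the group ${\rm SU}(d^2)$ do not come from a Naimark dilation of $A$: they arise because a mixed extension is purified by doubling $B$ to $BB'$ (and $A$ to $A'A$), the pure-state argument is run with local dimension $d^2$, and the primed systems are traced out at the end. Independence of $d_A$ is automatic because $A$ is never measured in the error analysis, not because of any isometric freedom on the purifying system. (Also, the ``postselection technique'' is a separate, later result; it is not the mechanism of the 2007 de Finetti theorem you cite.)

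The error estimate is where your outline would actually fail. The paper's bound is a purely combinatorial dimension count: writing $\omega_{AB_{1:k}}-\int M(dg)\otimes\phi_g^{\otimes k}$ via the identity $A-BAB=(A-BA)+(A-AB)-(1-B)A(1-B)$ produces four terms ($\alpha$, $\alpha^\dagger$, $\gamma$, $\delta$), each with trace norm at most $1-\dim\mathcal H_{\mathrm{sym}(n-k)}/\dim\mathcal H_{\mathrm{sym}(n)}\leq dk/n$, whence the factor $4$ and the linear-in-$k/n$ rate. No operator Chebyshev inequality, no Uhlmann lifting, and no Fuchs--van de Graaf conversion appears in this proof (the Chebyshev lemma in the Supplemental Material serves a different purpose, namely concentration of the trace-preserving condition on the $\phi_g$). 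Your step (iii) is specifically problematic: Fuchs--van de Graaf turns a fidelity of $1-O(dk/n)$ into a trace-norm bound of order $\sqrt{dk/n}$, which is strictly weaker than the $4d^2k/n$ you must prove. As written, your argument would at best establish a square-root-rate de Finetti bound with an unjustified universality claim for the POVM.
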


In order to apply Theorem~\ref{sm:thm:definetti} to our problem, we also use the Choi-Jamiolkowski identification between quantum states and quantum channels \cite{bengtsson2006geometry}.

\begin{theorem}[Choi] Every CP map $\Phi:\mathcal X\rightarrow \mathcal Y$ can be represented by a positive semidefinite operator $\phi\in \mathcal X\otimes \mathcal Y$, such that
	\begin{align}
	\phi=\id_\mathcal X\otimes \Phi(\Omega),
	\end{align}
	where $\ket{\Omega}=\frac{1}{\sqrt {d_{\mathcal X}}}\sum_i\ket{i,i}\in \mathcal X\otimes \mathcal X$, and $\Omega=\ketbra\Omega$.
	In addition, for any $X\in\mathcal X$ we have
	\begin{align}
	\Phi(X)=d_{\mathcal X}\, \tr_{\mathcal X}[\phi^{\top_{\mathcal X}}\, X\otimes \openone_{\mathcal Y}].
	\end{align}
	The adjoint map $\Phi^*:\Y\rightarrow\X$ is given by (we use the customary identification between $\X^*$ and $\X$ induced by the Hilbert-Schmidt product)
	\begin{align}
	\Phi^*(Y)=d_{\mathcal X}\,\tr_{\mathcal Y}[\phi^{\top_{\mathcal X}}\openone_{\mathcal X}\otimes Y].
	\end{align}
	In addition, if $\Phi$ is trace-preserving, then $\Phi^*(\openone_{\mathcal Y})=d_{\mathcal X}\,\tr_{\mathcal Y}[\phi^{\top_{\mathcal X}}]=\openone_{\mathcal X}$.
\end{theorem}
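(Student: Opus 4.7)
The plan is a direct verification that decomposes into three independent assertions, each handled by a one-line computation once the maximally entangled state is expanded in the computational basis. I would first write $\Omega = d_{\X}^{-1}\sum_{i,j}\ket{i}\bra{j}\otimes\ket{i}\bra{j}$ so that $\phi=d_{\X}^{-1}\sum_{i,j}\ket{i}\bra{j}\otimes\Phi(\ket{i}\bra{j})$ becomes fully explicit. Positivity of $\phi$ is then immediate: $\Omega$ is a rank-one projector, and complete positivity of $\Phi$ is by definition the statement that $\id_{\X}\otimes\Phi$ sends positive operators to positive operators, so $\phi\geq 0$.

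For the reconstruction formula, I would apply partial transposition on $\X$ to the explicit expansion, obtaining $\phi^{\top_{\X}}=d_{\X}^{-1}\sum_{i,j}\ket{j}\bra{i}\otimes\Phi(\ket{i}\bra{j})$. Multiplying on the right by $X\otimes\openone_{\Y}$ and tracing over $\X$ produces $\sum_{i,j}\langle i|X|j\rangle\,\Phi(\ket{i}\bra{j})$, which by linearity of $\Phi$ collapses to $\Phi(X)$. The adjoint formula is then obtained by substituting the reconstruction into $\tr[Y^{\dagger}\Phi(X)]$ and rewriting the nested partial trace as a joint trace to get $d_{\X}\tr[\phi^{\top_{\X}}(X\otimes Y^{\dagger})]$; I would then verify that the candidate $\Phi^{*}(Y)=d_{\X}\tr_{\Y}[\phi^{\top_{\X}}(\openone_{\X}\otimes Y)]$ yields exactly the same scalar when substituted into $\tr[\Phi^{*}(Y)^{\dagger}X]$, pinning down $\Phi^{*}$ uniquely by non-degeneracy of the Hilbert--Schmidt pairing.

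Finally, for the trace-preservation statement, I would compute $\tr_{\Y}[\phi]=d_{\X}^{-1}\sum_{i,j}\ket{i}\bra{j}\,\tr[\Phi(\ket{i}\bra{j})]$; if $\Phi$ is trace-preserving, the diagonal terms contribute $1$ and the off-diagonal ones vanish, giving $\tr_{\Y}[\phi]=\openone_{\X}/d_{\X}$. Since $\openone_{\X}$ is transpose-invariant, applying $^{\top_{\X}}$ leaves this reduced operator untouched, and the prefactor $d_{\X}$ restores the identity, so $\Phi^{*}(\openone_{\Y})=\openone_{\X}$. The only non-routine piece of bookkeeping is keeping consistent track of the partial transpose $^{\top_{\X}}$ alongside the Hermitian adjoint $^{\dagger}$; the observation that makes the adjoint computation go through cleanly is that $\phi$ is Hermitian (being positive semidefinite), so $(\phi^{\top_{\X}})^{\dagger}=\phi^{\top_{\X}}$, which prevents the complex conjugation implicit in $Y^{\dagger}$ from interfering with the transpose on $\X$.
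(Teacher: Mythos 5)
Your verification is correct in every step: the basis expansion of $\Omega$, the positivity argument from complete positivity, the reconstruction identity $d_{\mathcal X}\,\tr_{\mathcal X}[\phi^{\top_{\mathcal X}}(X\otimes \openone_{\mathcal Y})]=\Phi(X)$, the Hilbert--Schmidt duality computation for $\Phi^{*}$ (where your observation that $\phi^{\top_{\mathcal X}}$ remains Hermitian is indeed the point that keeps the conjugations from colliding), and the trace-preservation consequence. Note that the paper itself offers no proof of this theorem --- it is quoted as a standard fact with a citation to the literature --- so there is nothing to compare against; your direct computational argument is the standard one and is complete as written.
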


This allows us to characterize properties of channels by referring to properties of their respective Choi matrices.
The non-signalling property of a quantum channel has a direct relation with the reduced states of its Choi matrix:

\begin{lemma}\label{sm:lemma:reducedchannel}
	Let $\Q:\A\otimes \X_{1:n}\rightarrow\Y_{1:n}$ be a non-signalling quantum channel, and let $\omega_{A(XY)_{1:n}}$ be its Choi matrix.
	Then
	%\begin{align}
	%	\tr_{Y_{n-k:n}}[\omega_{A(XY)_{1:n}}]=\omega_{A(XY)_{1:k}}\otimes \frac{\openone_{\X_{n-k:n}}}{d^{n-k}}
	%\end{align}
	\begin{align}
	\tr_{Y_{k+1:n}}[\omega_{A(XY)_{1:n}}]=\omega_{A(XY)_{1:k}}\otimes \frac{\openone_{\X_{k+1:n}}}{d^{n-k}}
	\end{align}
	%and $\omega_{A(XY)_{1:k}}=\tr_{(XY)_{n-k:n}}[\omega_{A(XY)_{1:n}}]$
	and $\omega_{A(XY)_{1:k}}=\tr_{(XY)_{k+1:n}}[\omega_{A(XY)_{1:n}}]$
	is the Choi matrix of the induced channel $\Q_k:\A\otimes \X_{1:k}\rightarrow\Y_{1:k}$.
\end{lemma}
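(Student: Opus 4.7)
The plan is to translate the non-signalling property of $\Q$ directly into a structural statement about its Choi matrix, using that the reference-side maximally entangled state factorizes across the input tensor factors. The two key ingredients are (i) the output-side partial trace $\tr_{\Y_{k+1:n}}$ commutes with the reference-side identity used to build the Choi matrix, and (ii) tracing out one copy of each $\Omega_{\X_i\X_i}$ leaves a maximally mixed state $\openone_{\X_i}/d$ on the remaining copy.

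First I would write $\omega_{A(XY)_{1:n}}=(\id\otimes\Q)(\Omega)$ with $\Omega=\Omega_{\A\A}\otimes\bigotimes_{i=1}^{n}\Omega_{\X_i\X_i}$, each factor being a maximally entangled state between a reference copy (on which $\id$ acts) and the corresponding register fed to $\Q$. Commuting the output-side trace past the reference-side identity gives
\begin{equation*}
\tr_{\Y_{k+1:n}}\bigl[\omega_{A(XY)_{1:n}}\bigr]=\bigl(\id\otimes[\tr_{\Y_{k+1:n}}\!\circ\Q]\bigr)(\Omega).
\end{equation*}

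Non-signalling then enters: the composed map $\tr_{\Y_{k+1:n}}\!\circ\Q:\A\otimes\X_{1:n}\to\Y_{1:k}$ depends on its input only through $\tr_{\X_{k+1:n}}\rho$, so it factors as $\Q_k\circ(\id_{\A\X_{1:k}}\otimes\tr_{\X_{k+1:n}})$, with the induced channel defined unambiguously by $\Q_k(\sigma):=\tr_{\Y_{k+1:n}}\Q(\sigma\otimes \openone_{\X_{k+1:n}}/d^{n-k})$. Inserting the trace over the primary copies of $\X_{k+1:n}$ into $\Omega$ converts each $\Omega_{\X_i\X_i}$ with $i>k$ into $\openone_{\X_i}/d$ on the remaining (reference) side, yielding
\begin{equation*}
\tr_{\X_{k+1:n}}[\Omega]=\Omega_{\A\A}\otimes\bigotimes_{i\le k}\Omega_{\X_i\X_i}\otimes\frac{\openone_{\X_{k+1:n}}}{d^{n-k}}.
\end{equation*}
Applying $\id\otimes\Q_k$ to the $\A\X_{1:k}$ factor and leaving the residual identity block untouched reconstructs $\omega_{A(XY)_{1:k}}\otimes \openone_{\X_{k+1:n}}/d^{n-k}$, which proves the displayed identity and simultaneously identifies $\omega_{A(XY)_{1:k}}$ as the Choi matrix of the induced channel $\Q_k$.

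The main obstacle is justifying the ``subset'' version of non-signalling that I invoke: Definition~\ref{def:genericQ} spells out the condition only for single-site marginals, whereas here I need that $\tr_{\Y_{k+1:n}}\!\circ\Q$ depends on $\rho$ only through $\rho_{\A\X_{1:k}}$. The natural reading of the definition, and the one consistent with how non-signalling is subsequently used in Theorem~\ref{sm:thm:mainresult}, is that the condition extends to arbitrary groupings of registers (obtained by treating $\X_{k+1:n}$ as a single composite input whose entire output marginal is traced out); under this reading the above is a direct unpacking. Otherwise one would have to either strengthen Definition~\ref{def:genericQ} accordingly or additionally appeal to the permutation symmetry of the downstream symmetrized protocol $\bar\Q$ to bootstrap from single-site to multi-site non-signalling, which is where I would expect the only real technical subtlety of a fully formal proof to reside.
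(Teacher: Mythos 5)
Your argument is correct under the reading of Definition~\ref{def:genericQ} that you ultimately adopt, and it is essentially the computation the paper has in mind: the paper offers no proof beyond ``straightforward evaluation,'' and translating ``the output marginal on $\Y_{1:k}$ factors through $\tr_{\X_{k+1:n}}$'' into the Choi picture via $\tr_{\X_{k+1:n}}\bigl[\Omega_{\X\X}^{\otimes(n-k)}\bigr]=\openone_{\X_{k+1:n}}/d^{n-k}$ is exactly that evaluation. The definition $\Q_k(\sigma)=\tr_{\Y_{k+1:n}}\Q(\sigma\otimes\openone_{\X_{k+1:n}}/d^{n-k})$ and the final identification of $\omega_{A(XY)_{1:k}}$ as its Choi matrix are both fine.

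The subtlety you flag at the end is genuine, and your first remedy is the right one while your second is not. The literal single-site condition of Definition~\ref{def:genericQ} does \emph{not} imply the multi-site version you need, even after symmetrization. A classical counterexample (hence a measure-and-prepare quantum channel with trivial $\A$): on three bits, let $p(y_1y_2y_3|x_1x_2x_3)=\tfrac14\,\delta_{y_2,\,y_1\oplus x_3}$. Every single-output marginal is uniform, so the single-site condition holds trivially, yet the marginal on $(y_1,y_2)$ equals $\tfrac12\,\delta_{y_2,\,y_1\oplus x_3}$ and signals from $x_3$; averaging over $S_3$ leaves a residual term proportional to $\delta_{y_1\oplus y_2,\,x_3}$, so the symmetrized protocol still violates the two-site condition. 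Consequently the fallback you mention---bootstrapping from single-site to multi-site non-signalling via the permutation symmetry of $\bar\Q$---would fail, and the lemma is actually false under the literal single-site reading. The only viable reading is that Definition~\ref{def:genericQ} is meant to hold for arbitrary subsets of test registers (equivalently, that the displayed factorization of the Choi matrix is itself the definition of non-signalling), which is also what the downstream use of $\Q_k$ for general $k$ requires. With that reading fixed, your proof is complete.
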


Lemma~\ref{sm:lemma:reducedchannel} is proved by straightforward evaluation.

Applying %the quantum de Finetti theorem 
Theorem~\ref{sm:thm:definetti} to the Choi matrix of the CPTP map $\Q$, $\omega_{A(XY)_{1:n}}$, we get an approximation to $\Q$ as described by the Choi matrix %$\eta_{A(XY)_{1:k}}$,
\begin{align}\label{sm:eq:approximation}
\eta_{A(XY)_{1:k}}=\int_G M(dg)\otimes\phi_g^{\otimes k}.
\end{align}
For $k=0$ the approximation is exact, so $\int_GM(dg)=\tr_{(XY)_{1:n}}[\omega_{A(XY)_{1:n}}]=\openone_A/d_A$, therefore $\mathcal M(dg)=d_A M(dg)$ is a POVM.
The positive semidefinite quantum states $\phi_g$
describe a family of completely positive maps $\Phi_g:\X\rightarrow\Y$.

The state $\eta_{A(XY)_{1:k}}$ does not, however, represent a quantum operation which is deterministically realizable, in the first place because $\tr_{\Y_{1:k}}[\eta_{A(XY)_{1:k}}]$ may not be $\openone_{AX_{1:k}}/d_Ad_X^k$, as is required for a trace-preserving channel.
Furthermore, a quantum channel can be implemented by 1-way LOCC iff its Choi matrix is of the form
\begin{align}
\tilde \eta_{A(XY)_{1:k}}=\int_G M(dg)\otimes\tilde \phi_g^{\otimes k},
\end{align}
where $\tr_{\Y}[\tilde \phi_g]=\openone/d$, for all $g\in G$.
This would ensure that all corresponding CP maps $\tilde \Phi_g$ are trace-preserving, and thus the channel described by $\tilde \eta_{A(XY)_{1:k}}$ can be implemented by first performing measurement $\mathcal M$ on $A$ and then applying $\tilde \Phi_g:\X\rightarrow\Y$ on each of the systems $X$.

Although one does not expect that each $\phi_g$ in Eq.~\eqref{sm:eq:approximation} satisfies
\begin{align}\label{sm:eq:tracepreserving}
\tr_{\Y}[\phi_g]\stackrel{?}=\openone_{\X}/d_X,
\end{align}
on average they approximately do.
More importantly, we now show that the outcomes $g$ are concentrated with high probability on those $\phi_g$ which almost satisfy the condition.
Let $\|\cdot\|_1$ be the trace-norm and $\|\cdot\|_\infty$ be the operator norm.

\begin{lemma}\label{sm:lemma:concentration_measure} Let $\Q$ be a non-signalling CPTP map $\Q:\A\otimes \X^{\otimes n}\rightarrow\Y^{\otimes n}$ with Choi matrix $\omega_{A(XY)_{1:n}}$, and let $M(dg)$ and $\{\phi_g\in\X\otimes\Y\}_G$ be such that
	\[
	\eta_{A(XY)_{1:k}}=\int_G M(dg)\otimes\phi_g^{\otimes k}
	\]
	is a separable approximation of $\omega_{A(XY)_{1:k}}$ such that
	\begin{align}\label{sm:eq:kapproximation}
	\left\|\eta_{A(XY)_{1:k}}-\omega_{A(XY)_{1:k}}\right\|_1\leq k \delta.
	\end{align}
	Define for all $0\leq k\leq n$ and for any subset $R\subseteq G$,
	\begin{align}
	\EV_k[R]=\int_R \tr[M(dg)] \tr_{\Y}[\phi_g]^{\otimes k}.
	\end{align}
	Then, the following holds
	\begin{enumerate}
		\item $\left\|\EV_k[G]-\openone_{\X_{1:k}}/d_X^k\right\|_1\leq k\delta.$
		\item For any $\epsilon>0$, let $R_\epsilon=\left\{g\in G|\|\tr_{\Y}[\phi_g]-\EV_1[G]\|_\infty<\epsilon\right\}$, $\bar R_\epsilon\equiv R_\epsilon\backslash G$.
		Then
		\begin{align}
		\EV_0[\bar R_\epsilon]\leq \frac{d^2_X}{\epsilon^2}\left(2\delta\left(1+{1\over d_X}\right)+\delta^2\right).
		\end{align}
		%
		%	\item For all $\epsilon < \EVrac{1}{d}-\delta$ there are positive semidefinite operators $\{\tilde\phi_g|g\in R_\epsilon\}$ such that $\tr_{\bar B}[\tilde\phi_g]=\openone/d$ and
		%	\begin{align}
		%		\|\phi_g-\tilde\phi_g\|_1\leq \sqrt{d}\,\sqrt{\delta+\epsilon}
		%	\end{align}
	\end{enumerate}
\end{lemma}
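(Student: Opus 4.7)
The plan is to deduce both claims from the hypothesis $\|\eta_{A(XY)_{1:k}}-\omega_{A(XY)_{1:k}}\|_1 \leq k\delta$, using only contractivity of the partial trace for Part~1 and a matrix-valued Chebyshev inequality for Part~2.

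Part~1 I would dispose of first. Applying $\tr_{AY_{1:k}}$ to both sides of Eq.~\eqref{sm:eq:kapproximation}: by construction the left-hand side becomes $\EV_k[G]$, while on the right Lemma~\ref{sm:lemma:reducedchannel} together with the trace-preserving property of $\Q_k$ gives $\tr_{AY_{1:k}}[\omega_{A(XY)_{1:k}}] = \openone_{\X_{1:k}}/d_X^k$. Since the partial trace is a contraction in $\|\cdot\|_1$, the bound $k\delta$ follows immediately from the hypothesis.

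For Part~2, the strategy is a matrix Chebyshev argument. First, setting $k=0$ in the hypothesis makes the approximation exact, so $\int_G M(dg)=\openone_A/d_A$ and hence $p(dg):=\tr M(dg)$ is an honest probability measure on $G$; the quantity $\EV_0[\bar R_\epsilon]$ then equals $\Pr_{g\sim p}[\|\sigma_g-\bar\sigma\|_\infty \geq \epsilon]$, where $\sigma_g := \tr_\Y[\phi_g]\succeq 0$ and $\bar\sigma := \EV_1[G] = \int p(dg)\,\sigma_g$. I would then invoke the operator Chebyshev inequality (Lemma~5 in this supplement) to bound this probability by $(d_X^2/\epsilon^2)\,\EV_{g\sim p}[\|\sigma_g-\bar\sigma\|_2^2]$, reducing the task to controlling the matrix variance.

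To bound the variance I would exploit the swap identity $\tr[(A\otimes A)\,F]=\tr A^2$, which rewrites $\EV[\|\sigma_g - \bar\sigma\|_2^2] = \tr[\EV_2[G]\,F] - \tr[\bar\sigma^{\otimes 2}F]$. Substituting $\bar\sigma = \openone_\X/d_X + \Delta$ and $\EV_2[G]=\openone_{\X\otimes\X}/d_X^2+\Delta_2$, with $\|\Delta\|_1\leq\delta$ and $\|\Delta_2\|_1\leq 2\delta$ by Part~1 applied at $k=1$ and $k=2$ respectively, the leading $1/d_X$ contributions cancel and one is left with $\tr[\Delta_2 F] - (2/d_X)\tr\Delta - \tr\Delta^2$. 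Bounding each of these three terms in absolute value via $|\tr[\Delta_2 F]|\leq \|\Delta_2\|_1\|F\|_\infty \leq 2\delta$, $|\tr\Delta|\leq\|\Delta\|_1\leq\delta$, and $\tr\Delta^2\leq\|\Delta\|_1^2\leq\delta^2$ gives exactly the variance bound $2\delta(1+1/d_X)+\delta^2$.

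Conceptually nothing is hard here; this is a direct operator analogue of the standard variance-based Chebyshev calculation. The mildly delicate point, which I would verify last, is that Part~1 must be invoked at two different values of $k$ simultaneously in the variance computation, which is what motivates stating it for generic $k$ rather than only the $k=1$ case one might naively expect.
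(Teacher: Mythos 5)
Your proposal is correct and follows essentially the same route as the paper: Part~1 by applying $\tr_{\A\Y_{1:k}}$ and contractivity of the trace norm, and Part~2 by the operator Chebyshev inequality with the variance term controlled via Part~1 at $k=1$ and $k=2$. The only cosmetic difference is that you evaluate the variance by tracing against the swap operator (which is exactly the step hidden inside the paper's proof of the Chebyshev lemma), whereas the paper bounds $\|\EV_2[G]-\EV_1[G]^{\otimes 2}\|_\infty$ term by term after the identical algebraic decomposition; both yield the same bound $2\delta(1+1/d_X)+\delta^2$.
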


Consider the measurement $\mathcal M(dg)=d_A M(dg)$ is performed on the state $\openone_A/d_A$ yielding outcome $g$, and $\Phi_g:\X\rightarrow\Y$ is to be applied on each of the test instances.
Of course, for this to be deterministically implementable, one needs that $\Phi^*(\openone_\Y)=\openone_\X$, which amounts to $\tau_g=\tr_{\Y}[\phi_g^{\top_\X}]=\openone_\X/d_X$.
If this condition is met approximately, one can implement a suitably modified map $\tilde \Phi_g$ at the expense of actually implementing a slightly worse approximation to $\Q$.
However, if the condition is not met even approximately, the implementation cannot be expected to approximate $\Q$.
Lemma~\ref{sm:lemma:concentration_measure} shows that this case is unlikely to occur, since
\begin{align}
\Pr[\|\tau-\EV[\tau]\|_\infty\geq\epsilon]=\EV[\bar R_\epsilon] & \leq \frac{d^2_X}{\epsilon^2}\left(2\delta\left(1+{1\over d_X}\right)+\delta^2\right)\nonumber\\%
& = \frac{d^2_X}{\epsilon^2}\left(\frac{2d_A(d_X+ d_X^2)}{n}+\left(\frac{d_A d_X^2}{n}\right)^2\right).
\end{align}
Hence, one can slightly modify the operators $\phi_g$ into $\tilde\phi_g$ in order to satisfy Eq.~\eqref{sm:eq:tracepreserving} and ensure that in all cases, either $\phi_g$ and $\tilde\phi_g$ are close enough, or $g$ is unlikely enough so that the approximation still converges in $n$ to the actual channel given by $\omega_{A(XY)_{1:n}}$.
We call this a 1-way LOCC approximation.

\begin{restatable}[1-way LOCC approximation]{lemma}{tracepreserving}\label{sm:lemma:tracepreserving} Let $\Q$ be a symmetric, non-signalling CPTP map $\Q:\A\otimes \X^{\otimes n}\rightarrow\Y^{\otimes n}$ with Choi matrix $\omega_{A(XY)_{1:n}}$.
	%, and let $M(dg)$ and $\{\phi_g\}_G$ satisfy the conditions of Lemma~\ref{sm:lemma:concentration_measure}.
	Then there is a POVM $d_A M(dg)$ and there are states $\tilde\phi_g$ such that $\tr_{\Y}[\tilde\phi_g]=\openone_\X/d_X$ and the quantum state 
	\begin{align}
	\eta_{AXY}=\int_G M(dg)\otimes\tilde\phi_g\
	\end{align}
	is a separable approximation to $\omega_{AXY}$,
	\begin{align}
	\left\|\omega_{AXY}-\eta_{AXY}\right\|_1\leq c\, n^{-1/6} + O(n^{-1/3}).
	\end{align}
	where $c$ is a constant depending on $\dim(\X)$ and $\dim(\Y)$.
\end{restatable}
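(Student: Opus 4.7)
The plan is to apply Theorem~\ref{sm:thm:definetti} to the Choi matrix of $\Q$ and then pointwise modify the conditional operators $\phi_g$ so that each one becomes the Choi matrix of a trace-preserving map, using the concentration of measure in Lemma~\ref{sm:lemma:concentration_measure} to control the correction cost.

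First I would apply Theorem~\ref{sm:thm:definetti} to $\omega_{A(XY)_{1:n}}$, with $XY$ playing the role of the single system $B$ of dimension $d=d_Xd_Y$. This furnishes a POVM $\mathcal M(dg)=d_A M(dg)$ and PSD operators $\phi_g\in\X\otimes\Y$ such that, for every $k\leq n$,
\begin{align*}
\left\|\omega_{A(XY)_{1:k}}-\int_G M(dg)\otimes\phi_g^{\otimes k}\right\|_1\leq k\delta,\qquad \delta=\frac{4d^2}{n}.
\end{align*}
Taking $k=1$ already yields $\|\omega_{AXY}-\eta_{AXY}\|_1\leq\delta$, so the only remaining obstruction is that $\sigma_g:=\tr_\Y[\phi_g]$ is generically not $\openone_\X/d_X$, and hence the CP maps induced by $\phi_g$ through Choi are generally not trace preserving.

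The second step is to correct each $\phi_g$ on a ``good'' set $R_\epsilon$ and discard it on its complement. Fix a threshold $\epsilon$ to be optimized and partition $G=R_\epsilon\cup\bar R_\epsilon$ as in Lemma~\ref{sm:lemma:concentration_measure}. For $g\in R_\epsilon$, combining statements~1 and~2 of that lemma puts $\sigma_g$ within $\epsilon+\delta$ of $\openone_\X/d_X$ in operator norm; taking $\epsilon$ a small constant multiple of $1/d_X$, the operator $\tau_g:=d_X\sigma_g$ is strictly positive and I would set
\begin{align*}
\tilde\phi_g=(\tau_g^{-1/2}\otimes\openone_\Y)\,\phi_g\,(\tau_g^{-1/2}\otimes\openone_\Y),
\end{align*}
which is PSD and satisfies $\tr_\Y[\tilde\phi_g]=\openone_\X/d_X$ exactly. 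A Taylor expansion of $\tau_g^{-1/2}$ about $\openone_\X$ then bounds $\|\tilde\phi_g-\phi_g\|_1$ by a constant multiple of $d_X(\epsilon+\delta)$. For $g\in\bar R_\epsilon$ I would replace $\phi_g$ by the fixed trace-preserving reference $\openone_\X\otimes\openone_\Y/(d_Xd_Y^2)$, at $O(1)$ trace-norm cost per unit of $\tr[M(dg)]$.

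Combining the de Finetti error $\delta$, the aggregated good-$g$ correction of order $d_X\epsilon$, and the bad-$g$ contribution of order $\EV_0[\bar R_\epsilon]$ controlled by statement~2 of Lemma~\ref{sm:lemma:concentration_measure}, and then optimizing $\epsilon$, yields the polynomial rate in $n$ stated in the lemma, with the dimension-dependent prefactors absorbed into $c$. The main obstacle will be the tension in the choice of $\epsilon$: the bound on $\EV_0[\bar R_\epsilon]$ is only an operator Chebyshev-type quadratic tail $\sim d_X^2\delta/\epsilon^2$, so $\epsilon$ cannot be taken too small before the bad set overwhelms the gain, yet the correction cost on $R_\epsilon$ scales linearly in $\epsilon$; the balancing of these competing terms is what ultimately fixes the achievable exponent. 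A secondary difficulty is to ensure that $\tau_g^{-1/2}$ is simultaneously well defined, PSD-preserving, and norm-bounded uniformly over $R_\epsilon$, which constrains $\epsilon\ll 1/d_X$ and thus the regime in which the rescaling is applicable.
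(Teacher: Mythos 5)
Your proposal follows the same overall route as the paper: de Finetti applied to the Choi matrix, the good/bad split $G=R_\epsilon\cup\bar R_\epsilon$ from Lemma~\ref{sm:lemma:concentration_measure}, the rescaling $\phi_g\mapsto(\tau_g^{-1/2}\otimes\openone_\Y)\phi_g(\tau_g^{-1/2}\otimes\openone_\Y)$ on the good set, an arbitrary CPTP replacement on the bad set, and a final optimization over $\epsilon$. The one genuine divergence is how you bound the rescaling cost. The paper routes this through a fidelity estimate (Lemma~\ref{sm:lemma:trace_distance}), obtaining $\|\phi_g-\tilde\phi_g\|_1\leq\sqrt{d_X}\sqrt{\epsilon+\delta}$; the square root coming from $\|\rho-\sigma\|_1\leq\sqrt{1-F}$ is what forces $\epsilon=\delta^{1/3}$ to yield only $\delta^{1/6}\sim n^{-1/6}$. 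Your direct perturbative bound is legitimate and in fact sharper: writing $\tau_g^{-1/2}=\openone_\X+F$ with $\|F\|_\infty=O(d_X(\epsilon+\delta))$ on $R_\epsilon$ (both upper and lower bounds on $\tau_g$ hold there, since $R_\epsilon$ is defined via an operator-norm ball), one gets $\|\phi_g-\tilde\phi_g\|_1\leq 2\|F\|_\infty+\|F\|_\infty^2=O(d_X(\epsilon+\delta))$ using $\|(F\otimes\openone)\phi_g\|_1\leq\|F\|_\infty\tr[\phi_g]$. Balancing $d_X\epsilon$ against the Chebyshev tail $d_X^2\delta/\epsilon^2$ then gives $\epsilon\sim\delta^{1/3}$ and an overall error $O(\delta^{1/3})=O(n^{-1/3})$, which is strictly stronger than the stated $c\,n^{-1/6}+O(n^{-1/3})$ and consistent with the paper's own remark that its rate is not tight. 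Two small points to fix when writing this up: your bad-set replacement $\openone_{\X\Y}/(d_Xd_Y^2)$ is misnormalized (it has trace $1/d_Y$); the Choi matrix of the depolarizing channel is $\openone_{\X\Y}/(d_Xd_Y)$, and any fixed CPTP Choi matrix works. Also, the constraint $\epsilon\ll 1/d_X$ needed for invertibility of $\tau_g$ is automatically satisfied for large $n$ once $\epsilon=\delta^{1/3}$, so it does not restrict the regime of validity asymptotically.
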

\begin{proof}[Proof of Lemma~\ref{sm:lemma:tracepreserving}]
	Let $M(dg)$ and $\{\phi_g\}$ be the factors in the de Finetti approximation to $\omega_{A(XY)_{1:k}}$, which admits a symmetric $n$-extension by assumption.
	Then they satisfy Eq.~\eqref{sm:eq:kapproximation} with $\delta=4d_X^2d_Y^2/n$.
	From Statement 1 in Lemma~\ref{sm:lemma:concentration_measure} we have
	\begin{align}
	\left\|\EV_1[G]-{\openone_\X\over d_X}\right\|_\infty\leq\left\|\EV_1[G]-{\openone_\X\over d_X}\right\|_1\leq \delta,
	\end{align}
	so that
	\begin{align}
	\EV_1[G]\geq\left({1\over d_X}-\delta\right)\openone_\X.
	\end{align}
	Therefore, for $\epsilon<{1\over d_x}-\delta$ we have
	\begin{align}
	g\in R_\epsilon\Rightarrow \tau_g\equiv\tr_\Y[\phi_g]&\geq\EV_1[G]-\epsilon\openone>0.
	\end{align}
	Thus, we can ensure that all $g\in R_\epsilon$ satisfy $\tau_g>0$.
	We can define
	\begin{align}
	\tilde \phi_g = \left\{\begin{array}{ll}
	\frac{1}{d_X}(\tau_g^{-1/2}\otimes \openone_{\Y})\, \phi_g\, (\tau_g^{-1/2}\otimes \openone_\Y )&	\mathrm{if~} g\in R_\epsilon\\
	\varphi& \mathrm{if~} g\in\bar R_\epsilon
	\end{array}\right.,
	\end{align}
	where $\varphi$ is the Choi matrix of any CPTP map $\X\rightarrow\Y$.
	By definition every $g\in R_\epsilon$ has $\tau_g\geq\EV_1[G]-\epsilon\openone$, and using $\EV_1[G]\geq\left({1\over d}-\delta\right)\openone$ we can write
	\begin{align}
	\tr[\tau_g^{1/2}]^2&\geq\left(\tr\sqrt{\EV_1[G]-\epsilon\openone_\X}\right)^2\nonumber\\
	&\geq\left(\tr[(\tfrac{1}{d_X}-\delta-\epsilon)^{1/2}\openone_\X]\right)^2\nonumber\\
	&=d_X-d_X^2(\delta+\epsilon),
	\end{align}
	Thus, Lemma~\ref{sm:lemma:trace_distance} shows that for all $g\in R_\epsilon$,
	\begin{align}
	\|\phi_g-\tilde\phi_g\|_1\leq\sqrt{d_X}\sqrt{\epsilon+\delta},
	\end{align}
	and the subadditivity of the trace distance ($\|\rho^{\otimes k}-\sigma^{\otimes k}\|_1\leq k\|\rho-\sigma\|_1$) leads to
	\begin{align}
	\|\phi_g^{\otimes k}-\tilde\phi_g^{\otimes k}\|_1\leq\left\{\begin{array}{ll}
	k\sqrt{d_X}\sqrt{\epsilon+\delta}&	\mathrm{if~} g\in R_\epsilon\\
	2& \mathrm{if~} g\in\bar R_\epsilon
	\end{array}\right.
	.
	\end{align}
	Combining this with $\|A\otimes (B-B')\|_1=\tr[A]\|B-B'\|_1$ for all $A\geq0$,
	\begin{align}
	\|M(dg)\otimes (\phi_g^{\otimes k}-\tilde\phi_g^{\otimes k})\|_1&=\tr[M(dg)]\|\phi_g^{\otimes k}-\tilde\phi_g^{\otimes k}\|_1 \,,
	\end{align}
	and the triangle inequality we get
	\begin{align}
	\left\|\int_GM(dg)\otimes(\phi_g^{\otimes k}-\tilde\phi_g^{\otimes k})\right\|_1&\leq \int_G\tr[M(dg)]\|\phi_g^{\otimes k}-\tilde\phi_g^{\otimes k}\|_1\nonumber\\
	&\leq k\sqrt{d_X}\sqrt{\epsilon+\delta} \int_{R_\epsilon} \tr[M(dg)]  + 2 \int_{\bar R_\epsilon}\tr[M(dg)]\nonumber\\
	&\leq k\sqrt{d_X}\sqrt{\epsilon+\delta}+2 \EV_0[\bar R_\epsilon]\nonumber\\
	&\leq k\sqrt{d_X}\sqrt{\epsilon+\delta}+2\frac{d_X^2}{\epsilon^2}\left(2\delta\left(1+{1\over d_X}\right)+\delta^2\right).
	\end{align}
	
	Taking $k=1$ and using the triangle inequality we get
	\begin{align}
	\left\|\omega_{AXY}-\eta_{AXY}\right\|_1&\leq\left\|\omega_{AXY}-\int_G M(dg)\otimes  \phi_g\right\|_1+\left\|\int_GM(dg)\otimes(\phi_g-\tilde\phi_g)\right\|_1\nonumber\\
	&\leq \delta + \sqrt{d_X}\sqrt{\epsilon+\delta}+2\frac{d_X^2}{\epsilon^2}\left(2\delta\left(1+{1\over d_X}\right)+\delta^2\right).
	\end{align}
	Chosing $\epsilon=\delta^{1/3}$ and expanding around $\delta=0$ up to leading order we get
	\begin{align}
	\left\|\omega_{AXY}-\int_G M(dg)\otimes \tilde \phi_g\right\|_1\leq \sqrt{d_X}\delta^{1/6} + O(\delta^{1/3}),
	\end{align}
	which using $\delta=4(d_Xd_Y)^2/n$ leads to
	\begin{align}
	\left\|\omega_{AXY}-\int_G M(dg)\otimes \tilde \phi_g\right\|_1\leq 4^{1/6}d_X^{5/6}d_Y^{1/3} \frac{1}{n^{1/6}}+O(n^{-1/3})
	\end{align}
	the desired result.
\end{proof}

Having established a 1-way LOCC approximation bound for any symmetric non-signalling channel, we can now proceed to prove our main result (Theorem 1 in the main text):

\begin{theorem}[Main result]\label{sm:thm:mainresult} Let $\Q:\A\otimes\X_{1:n}\rightarrow\Y_{1:n}$ be a non-signalling quantum channel, and let $S\in\Y\otimes\Y'$ be a local operator.
	Then, there exists a POVM $\mathcal M(dg)$ on $\A$ and a set of quantum channels $\Phi_g:\X\rightarrow\Y$ such that the quantum channel $\tilde\Q$,
	\begin{equation}\label{sm:eq:Qtilde}
	\tilde{\Q}=\int \hat{\mathcal M}(dg)\otimes \Phi_g^{\otimes n},
	\end{equation}
	satisfies $\left|\E[\Q]-\E[\tilde \Q]\right|\leq O\left(\frac{1}{n^{1/6}}\right).
	%\left|\tr[\bar S \,\Q\otimes \id_{\Y'_{1:n}}(\rho_A\otimes \rho_{XY'}^{\otimes n})]-\int \tr[\mathcal M(dg)\rho_A] \tr[S\, (\Phi_g\otimes\id_{\Y'}) (\rho_{XY'})\big]\right|=O\left(\frac{1}{n^{1/6}}\right).
	$
\end{theorem}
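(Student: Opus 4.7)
\emph{Strategy.} Since $S$ is a local observable and $\bar S$ its symmetrization over the $n$ test/answer slots, the main text has already shown that for any non-signalling $\Q$ the expected risk reads $\E[\Q] = \tr[(\bar\Q_1 \otimes \id_{\Y'})(\rho_A \otimes \rho_{XY'}) S_{YY'}]$, with $\bar\Q_1:\A\otimes\X\to\Y$ the one-register marginal (Lemma~\ref{sm:lemma:reducedchannel}) of the symmetrized channel $\bar\Q$. In particular, replacing $\Q$ by $\bar\Q$ leaves the figure of merit untouched, so the problem reduces to producing a one-way LOCC channel $\tilde\Q$ of the product form~\eqref{sm:eq:Qtilde} whose one-register marginal $\tilde\Q_1$ is close to $\bar\Q_1$ in a norm strong enough to control $|\E[\Q]-\E[\tilde\Q]|$.

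\emph{Construction and bound.} The Choi matrix $\omega_{A(XY)_{1:n}}$ of $\bar\Q$ is invariant under simultaneous permutations of the $(XY)$ registers, so Lemma~\ref{sm:lemma:tracepreserving} applies and furnishes a POVM $\mathcal{M}(dg)=d_A M(dg)$ on $\A$ together with Choi matrices $\tilde\phi_g$ of CPTP maps $\Phi_g:\X\to\Y$ such that $\eta_{AXY}=\int_G M(dg)\otimes\tilde\phi_g$ obeys $\|\omega_{AXY}-\eta_{AXY}\|_1\le c\,n^{-1/6}+O(n^{-1/3})$. Define $\tilde\Q$ as in~\eqref{sm:eq:Qtilde} from this POVM and these $\Phi_g$; the product structure together with Choi--Jamiolkowski ensures that $\tilde\Q_1$ has Choi matrix precisely $\eta_{AXY}$. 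It then remains to convert the trace-norm bound on Choi matrices into a bound on expectations: writing $\bar\Q_1$ and $\tilde\Q_1$ through their Choi matrices and invoking H\"older's inequality yields
\begin{align}
|\E[\Q]-\E[\tilde\Q]|\le d_A d_X \, \|\omega_{AXY}-\eta_{AXY}\|_1\, \|S\|_\infty,
\nonumber
\end{align}
which combined with the preceding estimate gives the claimed $O(n^{-1/6})$ rate, with the constant $\kappa$ a polynomial in $d_A,d_X,d_Y$ times $\|S\|_\infty$.

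\emph{Main obstacle.} The genuinely non-trivial ingredients, namely the de~Finetti approximation of Choi matrices (Theorem~\ref{sm:thm:definetti}) and the concentration-based patching that turns the CP operators $\phi_g$ into trace-preserving $\tilde\phi_g$ (Lemma~\ref{sm:lemma:tracepreserving}), are already at our disposal. The only remaining delicate points are bookkeeping: verifying that symmetrization preserves non-signalling so that one may work with the symmetrized Choi matrix, checking that the one-register marginal of the product channel~\eqref{sm:eq:Qtilde} has Choi matrix $\eta_{AXY}$ exactly, and tracking the dimension-dependent prefactors through Choi--Jamiolkowski so they enter only the constant $\kappa$ and do not degrade the $n^{-1/6}$ exponent.
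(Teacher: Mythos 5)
Your proposal follows essentially the same route as the paper's proof: reduce to the symmetrized one-register marginal $\bar\Q_1$, apply the 1-way LOCC approximation of Lemma~\ref{sm:lemma:tracepreserving} to its Choi matrix, and convert the trace-norm bound into a bound on expectations via H\"older, absorbing the dimension factors and the operator norm of the risk observable (the paper works with $\|R\|_\infty$ for a contracted operator $R$ built from $S$ and $\rho_{AXY'}$ rather than $\|S\|_\infty$ directly, but this only changes the constant). The argument is correct and matches the paper's.
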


\begin{proof}[Proof of Theorem~\ref{sm:thm:mainresult}]
	We want to obtain approximation bounds for 
	\begin{align}
	\E[\Q]\equiv\tr[\bar S\, (\Q\otimes \id_{\Y'_{1:n}})(\rho_{A(XY')_{1:n}})].
	\end{align}
	The specific form of $\rho_{A(XY')_{1:n}}$ is irrelevant for our purposes, besides symmetry among the $XY'$ parties.
	Expressing $\E[\Q]$ in terms of the symmetrized local channel $\bar\Q_1$, and in turn, in terms of its Choi matrix, we have
	\begin{align}\label{sm:eq:EQ}
	\E[\Q]&=\tr[S\,\bar\Q_1\otimes\id_{\Y'}(\rho_{AXY'})]\nonumber\\
	%	\tr[\bar S &\,\Q\otimes \id_{\Y'_{1:n}}(\rho_A\otimes \rho_{XY'}^{\otimes n})]=\tr[ S \,\Q_k\otimes \id_{\Y'_{1:k}}(\rho_A\otimes \rho_{XY'})]\nonumber\\
	&=d_Ad_X\tr_{\Y\Y'}[S\,\tr_{\A\X}[(\omega_{AXY}\otimes \openone_{\Y'})^{\top_{\A\X}}(\rho_{AXY'}\otimes\openone_{\Y})]]\nonumber\\
	&=d_Ad_X\tr_{\Y\Y'}[S\,\tr_{\A\X}[(\omega_{AXY}\otimes \openone_{\Y'})(\rho_{AXY'}\otimes\openone_{\Y})^{\top_{\A\X}}]]\nonumber\\
	&=d_Ad_X\tr_{\Y\Y'}[\tr_{\A\X}[\openone_{\A\X}\otimes S\,(\omega_{AXY}\otimes \openone_{\Y'})(\rho_{AXY'}\otimes\openone_{\Y})^{\top_{\A\X}}]]\nonumber\\
	&=d_Ad_X\tr_{\A\X\Y\Y'}[\openone_{\A\X}\otimes S\,(\omega_{AXY}\otimes \openone_{\Y'})(\rho_{AXY'}\otimes\openone_{\Y})^{\top_{\A\X}}]]\nonumber\\
	&=d_Ad_X\tr_{\A\X\Y}[\omega_{AXY}\,\tr_{\Y'}[(\rho_{AXY'}\otimes\openone_{\Y})^{\top_{\A\X}}(\openone_{\A\X}\otimes S)]].
	\end{align}
	
	To ease the notation, it is convenient to define $R=\tr_{\Y'}[(\rho_{AXY'}\otimes\openone_{\Y})^{\top_{\A\X}}(\openone_{\A\X}\otimes S)]\in\A\otimes\X\otimes \Y$, so that Eq.~\eqref{sm:eq:EQ} reads
	\begin{align}
	\E[\Q]=d_Ad_X\tr[\omega_{AXY} R].
	\end{align}
	Using Lemma~\ref{sm:lemma:tracepreserving} we can replace $\omega_{A(XY)_{1:k}}$ by its 1-way LOCC approximation $\eta_{AXY}$,
	\begin{align}
	\E[\tilde \Q]=d_Ad_X\tr[\eta_{AXY} R],
	\end{align}
	which satisfies
	\begin{align}
	\left\|\E[\Q]-\E[\tilde\Q]\right\|&\leq d_A d_X \big| \tr[(\omega_{AXY}-\eta_{AXY}) R]\big|\\
	&\leq d_Ad_X\left\|\omega_{A(XY)}-\int_G M(dg)\otimes\tilde \phi_g\right\|_1\|R\|_\infty\nonumber\\
	&\leq \left(4^{1/6} d_A  d_X^{11/6}d_Y^{1/3} \frac{1}{n^{1/6}}+O(1/n^{1/3})\right)\|R\|_\infty.
	\end{align}
	Finally, we can absorb the constant $\|R\|_\infty$ into the factors preceeding $1/n^{1/6}$.
\end{proof}

\section{Proofs of Lemmas and Theorem~\ref{sm:thm:definetti}}

We restate and prove Lemma~1 in the main text. We also mention that a related but more general result on a de Finetti theorem for non-signalling classical conditional probability distributions can be found in~\cite{Christandl2009a}.

\begin{lemma}\label{sm:lem:inductive_classical}For every inductive learning protocol $\P$ that assigns labels $y_i$ to test instantes $x_i$, there exists a set $F$ of classifying functions $f:X\rightarrow Y$ and stochastic maps $T(f|A)$, $Q(y|x,f)=\delta_{y,f(x)}$ such that the inductive protocol $\tilde \P$
	\begin{align}
	\tilde \P(y_{1:n}|A,x_{1:n})=\sum_f  \left[\prod_{i=1}^n Q(y_i|x_i, f)\right] T(f|A)\nonumber
	\end{align}
	has expected risk $\E[\P|A]= \E[{\tilde \P}|A]$ for all $A$.
\end{lemma}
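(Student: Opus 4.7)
The plan is to reduce $\P$ to an equivalent protocol in two stages: first symmetrize $\P$ across the test instances so that all one-instance marginals coincide, and then express the resulting marginal stochastic map as a convex mixture of deterministic classifiers. The driving observation is that the expected risk, through the score $s_{j,k}=1-\delta_{j,k}$ and the i.i.d.\ structure of the pairs $(X_i,Y'_i)\sim P_{XY}$, is a linear functional of the marginals $\P_i(y|A,x)$; by the non-signalling hypothesis these marginals are well-defined functions of $(A,x)$ alone, so two inductive protocols with identical one-instance marginals always produce the same expected risk.

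First I would define
\begin{align}
\bar \P(y_{1:n}|A,x_{1:n})=\frac{1}{n!}\sum_{\sigma\in S_n}\P(y_{\sigma(1):\sigma(n)}|A,x_{\sigma(1):\sigma(n)}),
\end{align}
the symmetrization of $\P$ under joint relabelling of test inputs and outputs. Since $\P$ is non-signalling and that property is preserved under convex combinations and simultaneous permutations of the parties, $\bar \P$ is also non-signalling. Applying the change of variables $(x_k,y_k,y'_k)\mapsto(x_{\sigma(k)},y_{\sigma(k)},y'_{\sigma(k)})$ inside the expectation and using that both $\prod_j P_{XY}(x_j,y'_j)$ and $\sum_i s_{y_i,y'_i}$ are invariant under simultaneous permutation, each term of the permutation average reduces to the corresponding term in $\E[\P|A]$, so $\E[\bar \P|A]=\E[\P|A]$. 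All marginals of $\bar \P$ now coincide, and I unambiguously denote this common one-instance map by $\bar \P_1(y|A,x)$.

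Next I would express $\bar \P_1(\cdot|A,\cdot)$ as a mixture over deterministic functions. For each fixed $A$, set
\begin{align}
T(f|A)=\prod_{x\in X}\bar \P_1(f(x)|A,x),\qquad f:X\to Y,
\end{align}
which is a bona fide probability distribution on $F=Y^X$. Substituting $T$ together with $Q(y|x,f)=\delta_{y,f(x)}$ into the ansatz $\tilde \P(y_{1:n}|A,x_{1:n})=\sum_f\bigl[\prod_i Q(y_i|x_i,f)\bigr]T(f|A)$ and marginalizing over $y_j$ for $j\ne i$, the factors indexed by $x'\neq x_i$ telescope via $\sum_y\bar \P_1(y|A,x')=1$, leaving $\tilde \P_i(y_i|A,x_i)=\bar \P_1(y_i|A,x_i)$. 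Because the expected risk depends on $\tilde \P$ only through its one-instance marginals, we obtain $\E[\tilde \P|A]=\E[\bar \P|A]=\E[\P|A]$, as claimed.

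The one subtlety to watch is that $\tilde \P$ is not literally the product $\prod_i\bar \P_1(y_i|A,x_i)$ when the test list $x_{1:n}$ has repeated entries: the deterministic form forces two copies of the same $x$ to receive the same label, introducing output correlations that an independent product lacks. This does not affect the lemma, however, because the risk is additive across single-instance scores and sees $\tilde \P$ only through its one-instance marginals, which agree with $\bar \P_1$ by construction; higher-order correlations among the $y_i$ are invisible to $\E[\,\cdot\,|A]$.
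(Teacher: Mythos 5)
Your proof is correct and follows essentially the same route as the paper's: symmetrize $\P$ over the test instances, observe that the conditional expected risk depends only on the common one-instance marginal $\bar\P_1(y|A,x)$, and decompose that marginal as a mixture of deterministic classifiers $Q_f(y|x)=\delta_{y,f(x)}$. The only difference is that you exhibit an explicit choice $T(f|A)=\prod_{x}\bar\P_1(f(x)|A,x)$ where the paper simply invokes the general fact that a stochastic map is a convex combination of deterministic ones; this is a valid instance of that decomposition and changes nothing in the argument.
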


\begin{proof}[Proof of Lemma~\ref{sm:lem:inductive_classical}] Consider the expected risk $\E[\P]$ of protocol $\P$.
	Let $\sigma\in S_n$ be any permutation of $n$ elements, and let the $\P^{(\sigma)}$ be the accordingly permuted protocol
	\begin{align}
	\P^{(\sigma)}(y_{1:n}|A,x_{1:n})=\P(y_{\sigma(1):\sigma(n)}|A,x_{\sigma(1):\sigma(n)}).
	\end{align}
	Furthermore, let $\bar \P$ be the symmetrized protocol,
	\begin{align}
	\bar \P(y_{1:n}|A,x_{1:n})=\frac{1}{n!}\sum_{\sigma\in S_n} \P^{(\sigma)}(y_{1:n}|A,x_{1:n}).
	\end{align}
	It follows trivially that 
	\begin{align}
	\E[\P|A]=\E[\P^{(\sigma)}|A]=\E[\bar \P|A],\quad \forall\sigma\in S_n, A.
	\end{align}
	%Since $\P$ is non-signalling, so is $\bar \P$, and one can meaningfully define the marginal maps $\bar \P_i$, which are all equal, so we refer to them as $\bar \P_1$,
	%\begin{align}
	%	\bar \P_1(y|A,x)=\sum_{y_{2:n}} \bar \P(y, y_{2:n}|A,x,\cdot,\dots,\cdot).
	%\end{align}
	%
	One can define the marginal maps $\bar \P_i$, which are all equal, so we refer to them as $\bar \P_1$,
	\begin{align}
	\bar \P_1(y|A,x_{1:n})=\sum_{y_{2,n}} \bar \P(y,y_{2:n}|A,x_{1:n})\,.
	\end{align}
	Since $\P$ is non-signalling, so is $\bar\P$, namely $\bar\P_1$ satisfies the condition 
	\begin{align}
	\bar\P_1(y|A,x_{1:n})=\bar\P_1(y_i|A,x,x'_{2:n}) \,,\;\forall x'_{2:n} \,
	\end{align}
	and so we can simply write $\bar\P_1(y|A,x)$.
	The conditional expected risk can be expressed in terms of $\bar \P_1$,
	\begin{align}
	\E[\P|A]=\sum_{x,y,y'}\bar \delta_{y, y'} \bar \P_1(y|A,x)P_{XY}(x,y').
	\end{align}
	Considering $A$ fixed, $\bar \P_1(y|A,x)$ is a stochastic map from $X$ to $Y$, and thus it is a convex combination of deterministic maps $Q_f(y|x)=\delta_{y,f(x)}$ for some set of functions $f\in\mathcal F$, i.e.
	\begin{align}
	\bar \P_1(y|A,x)=\sum_f \mu_A(f) \,Q_f(y|x),
	\end{align}
	where $\mu_A$ is a probability measure that depends on $A$.
	Then
	\begin{align}
	\E[\P|A]=\sum_f \mu_A(f)\E[Q_f|A].
	\end{align}
	Thus, the stochastic maps $Q(y|x,f)=Q_f(y|x)$ and $T(f|A)=\mu_A(f)$ can be combined into the protocol
	\begin{align}
	\tilde\P(y_{1:n}|A,x_{1:n})=\sum_f  \left[\prod_{i=1}^n Q(y_i|x_i, f)\right] T(f|A),
	\end{align}
	which achieves
	\begin{align}
	\E[\tilde\P|A]&=\E\left[\sum_f  \prod_{i=1}^n Q(y_i|x_i, f)T(f|A)\middle|A\right]\nonumber\\
	&=\	\sum_{x_{1:n},y_{1:n},y'_{1:n}}
	\frac{1}{n}\left(s_{y_1,y'_1}+\cdots+s_{y_n,y'_n}\right)\sum_f  \prod_{i=1}^n P_{XY}(x_1,y'_1)\cdots P_{XY}(x_n,y'_n)\nonumber\\
	&=\sum_{x_1,y_1,y'_1}\bar\delta_{y_1y'_1}\sum_f Q(y_1|x_1,f)T(f|A)P_{XY}(x_1,y'_1)\nonumber\\
	&=\sum_{x_1,y_1,y'_1}\bar\delta_{y_1y'_1}\bar \P_1(y_1|A,X) P_{XY}(x_1,y'_1)\nonumber\\
	&=\E[\bar \P|A].
	\end{align}
\end{proof}

%\deFinetti

The following proof of Theorem~\ref{sm:thm:definetti} reproduces that of the original paper~\cite{Christandl2007}, where, as suggested, a probability measure is replaced by an operator-valued measure.

\begin{proof}[Proof of Theorem~\ref{sm:thm:definetti}]
	Let us start by assuming $\omega_{AB_{1:k}}$ admits a pure state extension $\Psi_{AB_{1:n}}=\ketbra{\Psi_{AB_{1:n}}}$.
	Then
	\begin{align}
	\ket{\Psi_{AB_{1:n}}}\in\mathcal H_\A\otimes \mathcal H_{\sym(n)},%\mathcal H_B^{\otimes k}
	\end{align}
	where $\mathcal H_{\sym(n)}$ is the symmetric subspace of $\mathcal H_{B_{1:n}}$.
	Let also $d\equiv\dim(\mathcal H_B)$.
	
	Let $g$ be a generic ${\rm SU}(d)$ element, $\ket 0$ a reference state in $\mathcal H_B$, and $dg$ the Haar measure on ${\rm SU}(d)$.
	Let $\ket{\phi_g}=U_g \ket 0$ and use $\phi_g=\ketbra{\phi_g}$.
	
	For any $1\leq k\leq n$, let $E_k^g=\dim \mathcal H_{\sym(k)} \phi_g^{\otimes k}$ be a POVM in $ \mathcal H_{\sym(k)}$, such that 
	\begin{align}
	\int dg \,E_k^g=\openone_{\mathcal H_{\sym(k)}}.
	\end{align}
	
	This allows to write
	\begin{align}
	\omega_{AB_{1:k}}=\int w_g \omega^g_{AB_{1:k}} dg,
	\end{align}
	where $w_g \omega^g_{AB_{1:k}}$ is the residual state on $AB_{1:k}$ when measuring $E_{n-k}$
	\begin{align}
	w_g \omega^g_{AB_{1:k}}=\tr_{\B_{k+1:n}}[\openone_\A\otimes\openone_{\B_{1:k}}\otimes E_{n-k}^g \Psi_{AB_{1:n}}].
	\end{align}
	
	Then $\omega_{AB_{1:k}}$  is close to a convex combination of separable and $B$-iid states $\int M(dg)\otimes \phi_g^{\otimes k}$, with a distribution $M(dg)$ independent of $k$, namely
	\begin{align}
	\Delta_k=\omega_{AB_{1:k}}-\int M(dg) \otimes \phi_g^{\otimes k}
	\end{align}
	is close to zero in trace-norm.
	The operator-valued measure $M(dg)$ is given by
	\begin{align}
	M(dg)=\tr_{\B_{1:n}}[\openone_\A\otimes E_n^g \,\Psi_{AB_{1:n}}]dg.
	\end{align}
	
	We now bound $\|\Delta_k\|_1=\|S-\delta\|_1$, where
	\begin{align}
	S&=\int w_g \omega^g_{AB_{1:k}} dg-\frac{\dim \mathcal H_{\sym(n-k)}}{\dim \mathcal H_{\sym(n)}}\int M(dg)\otimes \phi_g^{\otimes k},\\
	\delta&=\left(1-\frac{\dim \mathcal H_{\sym(n-k)}}{\dim \mathcal H_{\sym(n)}}\right)\int M(dg)\otimes \phi_g^{\otimes k}.
	\end{align}
	One can readily check that 
	\begin{align}
	\|\delta\|_1&=\left(1-\frac{\dim \mathcal H_{\sym(n-k)}}{\dim \mathcal H_{\sym(n)}}\right)\left\|\int M(dg)\otimes \phi_g^{\otimes k}\right\|_1\nonumber\\
	&=\left(1-\frac{\dim \mathcal H_{\sym(n-k)}}{\dim \mathcal H_{\sym(n)}}\right)\int \tr[M(dg)\otimes \phi_g^{\otimes k}]\nonumber\\
	&=\left(1-\frac{\dim \mathcal H_{\sym(n-k)}}{\dim \mathcal H_{\sym(n)}}\right)\int \tr[M(dg)]\nonumber\\
	&=\left(1-\frac{\dim \mathcal H_{\sym(n-k)}}{\dim \mathcal H_{\sym(n)}}\right)\int \tr[\openone_\A\otimes E_n^g \,\Psi_{AB_{1:n}}]dg\nonumber\\
	&=\left(1-\frac{\dim \mathcal H_{\sym(n-k)}}{\dim \mathcal H_{\sym(n)}}\right)\tr[\Psi_{AB_{1:n}}]\nonumber\\
	&=1-\frac{\dim \mathcal H_{\sym(n-k)}}{\dim \mathcal H_{\sym(n)}}.
	\end{align}
	On the other hand,
	\begin{align}
	\frac{\dim \mathcal H_{\sym(n-k)}}{\dim \mathcal H_{\sym(n)}}&\tr_{\B_{1:n}}[\openone_\A\otimes E_n^g \,\Psi_{AB_{1:n}}]=\dim \mathcal H_{\sym(n-k)}\tr_{\B_{1:n}}[\openone_\A\otimes \phi_g^{\otimes n} \,\Psi_{AB_{1:n}}]\nonumber\\
	&=\dim \mathcal H_{\sym(n-k)}\tr_{\B_{1:n}}[\openone_\A\otimes \phi_g^{\otimes k}\otimes\phi_g^{\otimes (n-k)}\,\Psi_{AB_{1:n}}]\nonumber\\
	&=\bra{\phi_g^{\otimes k}}\tr_{\B_{k+1:n}}[\openone_\A\otimes \openone_{\B_{1:k}}\otimes E_{(n-k)}^g\,\Psi_{AB_{1:n}}]\ket{\phi_g^{\otimes k}}\nonumber\\
	&=w_g \bra{\phi_g^{\otimes k}}\omega^g_{AB_{1:k}}\ket{\phi_g^{\otimes k}}.
	\end{align}
	Notice that this is an operator in $\A$.
	With this we have
	\begin{align}
	S&=\int w_g\omega^g_{AB_{1:k}}dg -\frac{\dim \mathcal H_{\sym(n-k)}}{\dim \mathcal H_{\sym(n)}}\int \tr_{\B_{1:n}}[\openone_\A\otimes E_n^g \,\Psi_{AB_{1:n}}]\otimes \phi_g^{\otimes k}dg\nonumber\\
	&=\int w_g\omega^g_{AB_{1:k}}dg -\int \bra{\phi_g^{\otimes k}}w_g \omega^g_{AB_{1:k}}\ket{\phi_g^{\otimes k}}\otimes \phi_g^{\otimes k}dg\nonumber\\
	&=\int w_g\left(\omega^g_{AB_{1:k}} - \left[\openone_\A\otimes\phi_g^{\otimes k}\right] \omega^g_{AB_{1:k}}\left[\openone_\A\otimes\phi_g^{\otimes k}\right]\right)dg.
	\end{align}
	We now use $A-BAB=(A-BA)+(A-AB)-(1-B)A(1-B)$, so that we are interested in expressions of the form
	\begin{align}
	\alpha&=\int w_g\left[\openone_{\A\B_{1:k}}-\openone_\A\otimes\phi_g^{\otimes k}\right]\omega^g_{AB_{1:k}}dg,\\
	\gamma&=\int w_g\left[\openone_{\A\B_{1:k}}-\openone_\A\otimes\phi_g^{\otimes k}\right]\omega^g_{AB_{1:k}}\left[\openone_{\A\B_{1:k}}-\openone_\A\otimes\phi_g^{\otimes k}\right]dg,
	\end{align}
	so that $S=\alpha+\alpha^\dagger+\gamma$.
	Using 
	\begin{align}
	w_g\left[\openone_\A\otimes\phi_g^{\otimes k}\right] \omega^g_{AB_{1:k}}&=\tr_{\B_{k+1:n}}[\openone_\A\otimes\phi_g^{\otimes k}\otimes E_{n-k}^g\Psi_{AB_{1:n}}]\\
	&=\frac{\dim \mathcal H_{\sym(n-k)}}{\dim \mathcal H_{\sym(n)}}\tr_{\B_{k+1:n}}[\openone_\A\otimes E_{n}^g\,\Psi_{AB_{1:n}}]
	\end{align}
	we have
	\begin{align}
	\alpha=\left(1-\frac{\dim \mathcal H_{\sym(n-k)}}{\dim \mathcal H_{\sym(n)}}\right)\tr_{\B_{k+1:n}}[\Psi_{AB_{1:n}}],
	\end{align}
	thus
	\begin{align}
	\|\alpha\|_1=\|\alpha^\dagger\|_1=1-\frac{\dim \mathcal H_{\sym(n-k)}}{\dim \mathcal H_{\sym(n)}}.
	\end{align}
	
	For $\gamma$ we use the relation $\tr[PX]=\|PXP\|_1$.
	This together with convexity of the trace yields
	\begin{align}
	\|\gamma\|_1&\leq\int\left\|\left[\openone-\openone_\A\otimes\phi_g^{\otimes k}\right]\omega^g_{AB_{1:k}}\left[\openone-\openone_\A\otimes\phi_g^{\otimes k}\right]\right\|_1dg\nonumber\\
	&=\tr\left[\int w_g\left[\openone-\openone_\A\otimes\phi_g^{\otimes k}\right]\omega^g_{AB_{1:k}}\right]\nonumber\\
	&=\|\alpha\|_1\,.
	\end{align}
	
	In summary, we can write
	\begin{align}
	\|\Delta_k\|_1\leq\|S\|_1+\|\delta\|_1\leq\|\alpha\|_1+\|\alpha^\dagger\|_1+\|\gamma\|_1+\|\delta\|_1,
	\end{align}
	where each of the norms is upper-bounded by
	\begin{align}
	\frac{1}{2}\left(1-\frac{\dim \mathcal H_{\sym(n-k)}}{\dim \mathcal H_{\sym(n)}}\right).
	\end{align}
	Using
	\begin{align}
	\frac{\dim \mathcal H_{\sym(n-k)}}{\dim \mathcal H_{\sym(n)}}\geq1-\frac{dk}{n},
	\end{align}
	we finally get
	\begin{align}
	\|\Delta_k\|_1\leq\frac{4dk}{n}.
	\end{align}
	This proves the statement in case the extension $\omega_{AB_{1:n}}=\ketbra{\Psi_{AB_{1:n}}}$ is a pure state.
	In case no such pure state exists, let $\omega_{AB_{1:n}}$ be a symmetric purification, generally mixed, and let 
	\begin{align}
	\ket{\Psi_{AB_{1:n}A'B'_{1:n}}}=\sqrt{\omega_{AB_{1:n}}}\otimes\openone_{\A'\B'_{1:n}}\ket{\Phi}
	\end{align}
	be its purification, with $\ket{\Phi}\in\left(\mathcal H_A\otimes\mathcal H_B^{\otimes n}\right)\otimes \left(\mathcal H_{A'}\otimes \mathcal H_{B'}^{\otimes n}\right)$ being the maximally entangled state among $AB_{1:n}$ and $A'B'_{1:n}$.
	The state $\Psi_{AB_{1:n}A'B'_{1:n}}$ is symmetric under exchange of $BB'$ systems, and has reductions
	\begin{align}
	\omega_{AB_{1:k}A'B'_{1:k}}=\tr_{(\B\B')_{k+1:n}}[\Psi_{AB_{1:n}A'B'_{1:n}}],
	\end{align}
	hence there is a measure $\bar M(dg)$ over ${\rm SU}(d^2)$ in $\A\A'$ such that
	\begin{align}
	\left\|\omega_{AB_{1:k}A'B'_{1:k}}-\int \bar M(dg)\otimes \bar\phi_g^{\otimes k}\right\|_1\leq \frac{4d^2k}{n},
	\end{align}
	where $\ket{\bar\phi_g}=U_g\ket{\bar 0}\in\mathcal H_B\otimes \mathcal H_{B'}$, $\ket{\bar 0}$ is a  reference pure state in $\mathcal H_B\otimes \mathcal H_{B'}$, and $U_g$ is a generic ${\rm SU}(d^2)$ element.
	Since the partial trace does not increase the trace-norm,  the statement of the theorem is recovered by tracing out the primed systems.\end{proof}

\begin{lemma}[Operator Chebyshev inequality]\label{sm:lemma:chebyshev}
	Let $X\in\B$ be an self-adjoint operator-valued random variable with expectation $\EV[X]=\mu$.
	Then,
	\begin{align}
	\Pr[\|X-\mu\|_\infty\geq \epsilon]\leq\frac{d_\B^2}{\epsilon^2} \|\EV[X\otimes X]-\mu\otimes\mu\|_\infty.
	\end{align}
\end{lemma}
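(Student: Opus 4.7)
The plan is to reduce the operator-norm tail event to a scalar tail event, apply Markov's inequality, and then re-express the resulting expectation of $\tr(X-\mu)^2$ as a trace against the swap operator on $\B\otimes\B$.

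First I would note that for any self-adjoint $Y\in\B$ the square $Y^2$ is positive semidefinite, so $\|Y\|_\infty^2=\|Y^2\|_\infty\leq\tr Y^2$ (the largest eigenvalue of a positive semidefinite operator is bounded by its trace). Applying this pointwise to $Y=X-\mu$, the event $\{\|X-\mu\|_\infty\geq\epsilon\}$ is contained in $\{\tr(X-\mu)^2\geq\epsilon^2\}$, so Markov's inequality on the non-negative scalar random variable $\tr(X-\mu)^2$ yields
\[
\Pr[\|X-\mu\|_\infty\geq\epsilon]\leq\frac{\EV[\tr(X-\mu)^2]}{\epsilon^2}.
\]

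Next I would rewrite the numerator via the swap operator $F$ on $\B\otimes\B$, using the identity $\tr(AB)=\tr[(A\otimes B)F]$: this turns $\tr X^2$ into $\tr[(X\otimes X)F]$ and $\tr\mu^2$ into $\tr[(\mu\otimes\mu)F]$, so that expanding the square and using $\EV[X]=\mu$ gives $\EV[\tr(X-\mu)^2]=\tr[(\EV[X\otimes X]-\mu\otimes\mu)F]$. The duality bound $|\tr(AB)|\leq\|A\|_\infty\|B\|_1$ combined with $\|F\|_1=d_\B^2$ (the swap has eigenvalues $\pm 1$ with combined multiplicity $d_\B^2$) converts this into $d_\B^2\,\|\EV[X\otimes X]-\mu\otimes\mu\|_\infty$, which when inserted into the Markov bound gives the claim.

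No single step here is a real obstacle; each is a standard tool. What is worth flagging is that the slack in the very first inequality $\|Y\|_\infty^2\leq\tr Y^2$ scales with $d_\B$, and this crude estimate is precisely what the authors identify as the source of the suboptimal $n^{-1/6}$ convergence rate appearing in Theorem~\ref{sm:thm:mainresult}; any sharpening of the scalar-reduction step would propagate through Lemma~\ref{sm:lemma:tracepreserving} and improve the final rate.
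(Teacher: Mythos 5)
Your proof is correct and follows essentially the same route as the paper's: reduce the operator-norm deviation to the scalar $\tr(X-\mu)^2$, apply Markov/Chebyshev, rewrite the second moment as $\tr[(\EV[X\otimes X]-\mu\otimes\mu)F]$ using the swap operator, and finish with H\"older and $\|F\|_1=d_\B^2$. The only cosmetic difference is that the paper applies Markov to $\|X-\mu\|_\infty^2$ first and then bounds that expectation by $\EV[\tr(X-\mu)^2]$, whereas you perform the pointwise bound before invoking Markov; the two orderings are equivalent.
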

\begin{proof}[Proof of Lemma~\ref{sm:lemma:chebyshev}]
	Let us first establish an intermediate result,
	\begin{align}
	\EV[\|X-\mu\|_\infty^2]&=\int \mu(dX)\|X-\mu\|_\infty^2\nonumber\\
	&\geq\int_{\|X-\mu\|_\infty\geq \epsilon}  \mu(dX)\|X-\mu\|_\infty^2\nonumber\\
	&\geq \epsilon^2\int_{\|X-\mu\|_\infty\geq \epsilon}\mu(dX)\nonumber\\
	&=\epsilon^2 \Pr[\|X-\mu\|_\infty\geq \epsilon]\,,\\
	\Pr[\|X-\mu\|_\infty\geq \epsilon]&\leq\frac{\EV[\|X-\mu\|_\infty^2]}{\epsilon^2}.
	\end{align}
	Furthermore, we can upper bound $\EV[\|X-\mu\|_\infty^2]$ in terms of $\EV[X\otimes X]$ and $\mu$,
	\begin{align}
	\|X-\mu\|_\infty^2&=\|(X-\mu)(X-\mu)\|_\infty\nonumber\\
	&\leq\|(X-\mu)(X-\mu)\|_1\nonumber\\
	&=\tr[(X-\mu)(X-\mu)]\nonumber\\
	&=\bra{\Phi}(X-\mu)\otimes(X-\mu)^\top\ket{\Phi}\nonumber\\
	&=\tr[(X-\mu)\otimes(X-\mu) \,\Phi^{\top_2}]\nonumber\\
	&=\tr[\left(X\otimes X-\mu\otimes\mu-(X-\mu)\otimes \mu-\mu\otimes (X-\mu)\right)\,\Phi^{\top_2}].
	\end{align} 
	Taking the expectation we can commute it inside the trace,
	\begin{align}
	\EV[\|X-\mu\|_\infty^2]&\leq\tr[\EV[X\otimes X-\mu\otimes\mu-(X-\mu)\otimes \mu-\mu\otimes (X-\mu)]\,\Phi^{\top_2}]\nonumber\\
	&=\tr[(\EV[X\otimes X]-\mu\otimes\mu)\,\Phi^{\top_2}]\nonumber\\
	&\leq \|\EV[X\otimes X]-\mu\otimes\mu\|_\infty\,\|\Phi^{\top_2}\|_1.
	\end{align}
	
	The swap operator $\Phi^{\top_2}$ has eigenvalues $\pm1$, so its trace-norm is $d_\B^2$.
	This concludes the proof.
	%The constant term $\|\Phi^{\top_2}\|$ is
	%\begin{align}
	%	\|\Phi^{\top_2}\|&=\|\sum_{ij} \ket{ii}\bra{jj}^{\top_2}\|_1\\
	%	&=\|\sum_{ij} \ket{ij}\bra{ji}\|_1
	%\end{align}

\end{proof}

\begin{proof}[Proof of Lemma~\ref{sm:lemma:concentration_measure}]
	Statement 1 is proven by straightforward evaluation and exploiting the contractivity of the partial trace,
	\begin{align}
	\left\|\EV_k[G]-\frac{\openone_{\X_{1:k}}}{d_X^k}\right\|_1&=\left\|\tr_{\A\Y_{1:k}}\left[\int_GM(dg)\otimes\phi_g^{\otimes k}-\omega_{A(XY)_{1:k}}\right]\right\|_1\nonumber\\
	&=\left\|\tr_{\A\Y_{1:k}}\left[\eta_{A(XY)_{1:k}}-\omega_{A(XY)_{1:k}}\right]\right\|_1\nonumber\\
	%\end{align}
	%And by  we have
	%\begin{align}
	%	\left\|\EV_k[G]-\frac{\openone_{B_{1:k}}}{d^k}\right\|_1
	&\leq\left\|\eta_{A(XY)_{1:k}}-\omega_{A(XY)_{1:k}}\right\|_1\nonumber\\
	&\leq k\delta.
	\end{align}

	For Statement 2, notice that $\EV_0$ is a probability measure on $G$.
	It is convenient to define the operator-valued random variable $\tau_g=\tr_{\Y}[\phi_g^{\top_\X}]$, with $\EV_1[G]$ being its mean.
	We now apply the Chebyshev-type inequality  (Lemma \ref{sm:lemma:chebyshev}) to obtain
	\begin{align}
	\EV_0[\bar R_\epsilon]\leq \frac{d_X^2}{\epsilon^2}\|\EV_2[G]-\EV_1[G]^{\otimes 2}\|_\infty.
	\end{align}
	
	Computing the bound,
	\begin{align}
	\EV_2[G]-\EV_1[G]^{\otimes 2}&=\EV_2[G]-{\openone_\X\otimes\openone_\X\over d_X^2}-\left(\EV_1[G]-{\openone_\X\over d_X}\right)^{\otimes 2}\nonumber\\
	&\quad-\left(\EV_1[G]-{\openone_\X\over d_X}\right)\otimes {\openone_\X\over d_X}-{\openone_\X\over d_X}\otimes\left(\EV_1[G]-{\openone_\X\over d_X}\right).
	\end{align}
	Thus,
	\begin{align}
	\|\EV_2[G]-\EV_1[G]^{\otimes 2}\|_\infty&\leq \left\|\EV_2[G]-{\openone_2\over d_X^2}\right\|_\infty+\left\|\left(\EV_1[G]-{\openone\over d_X}\right)^{\otimes 2}\right\|_\infty\nonumber\\
	&~~+2\left\|\left(\EV_1[G]-{\openone\over d_X}\right)\otimes {\openone\over d_X}\right\|_\infty\nonumber\\
	&\leq 2\left(1+{1\over d_X}\right)\delta+\delta^2,
	%	&=\frac{2d_A(d+ d^2)}{n}+\left(\frac{d_A d^2}{n}\right)^2
	\end{align}
	where the first step just applies the triangle inequality, and the second uses the bound $\|X\|_\infty\leq\|X\|_1$.
	Then,
	\begin{align}
	\EV_0[\bar R_\epsilon]\leq \frac{d_X^2}{\epsilon^2}\left[2\left(1+{1\over d_X}\right)\delta+\delta^2\right].
	\end{align}
	On the other hand we have
	\begin{align}
	\EV_0[G]=\int_G\tr[M(dg)]=\tr[\omega_{A}]=1,
	\end{align}
	so that $\EV_0[R_\epsilon]=\EV_0[G]-\EV_0[\bar R_\epsilon]$ proves Statement 2.
\end{proof}

\begin{lemma}\label{sm:lemma:trace_distance} Let $\phi\in \X\otimes \Y$ be a quantum state with $\tau=\tr_{\Y}[\phi]>0$.
	Then the state 
	\begin{align}
	\tilde\phi=\frac{1}{d_X}(\tau^{-1/2}\otimes \openone_{\Y})\, \phi\, (\tau^{-1/2}\otimes \openone_{\Y})
	\end{align}
	satisfies 
	\begin{align}
	\tr_{\Y}[\tilde \phi]&=\frac{\openone_\X}{d_X},\\
	\|\phi-\tilde\phi\|_1&\leq\sqrt{1-\frac{1}{d_X}\tr[\tau^{1/2}]^2}.
	\end{align}
\end{lemma}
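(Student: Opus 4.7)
The first identity is a one-line computation:
\[
\tr_\Y\tilde\phi = \frac{1}{d_X}\tau^{-1/2}(\tr_\Y\phi)\tau^{-1/2} = \frac{1}{d_X}\tau^{-1/2}\tau\tau^{-1/2} = \openone_\X/d_X,
\]
using only that $\tau^{-1/2}\otimes\openone_\Y$ is trivial on the $\Y$-factor together with $\tr_\Y\phi=\tau$.

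For the trace-norm estimate my plan is to lift the problem to pure states via an explicit purification. Pick any purification $|\Psi\rangle\in(\X\otimes\Y)\otimes R$ of $\phi$ on an ancilla $R$, and set
\[
|\tilde\Psi\rangle := \frac{1}{\sqrt{d_X}}\,(\tau^{-1/2}\otimes\openone_{\Y R})|\Psi\rangle.
\]
Two routine checks come first: $|\tilde\Psi\rangle$ is normalised, since $\langle\tilde\Psi|\tilde\Psi\rangle = d_X^{-1}\tr[\tau^{-1}\tau]=1$, and its reduced state on $\X\otimes\Y$ is exactly $\tilde\phi$, so $|\tilde\Psi\rangle$ purifies $\tilde\phi$. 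The central quantity is the overlap: using $\tr_{\Y R}|\Psi\rangle\langle\Psi|=\tau$ I obtain
\[
\langle\Psi|\tilde\Psi\rangle = \frac{1}{\sqrt{d_X}}\tr[\tau^{-1/2}\tau] = \frac{\tr[\tau^{1/2}]}{\sqrt{d_X}},
\]
so $|\langle\Psi|\tilde\Psi\rangle|^2 = \tr[\tau^{1/2}]^2/d_X$. Equivalently, writing $\tilde\phi=W\phi W$ with the positive operator $W := d_X^{-1/2}(\tau^{-1/2}\otimes\openone_\Y)$ and using that $\sqrt\phi\,W\,\sqrt\phi\geq 0$, the Uhlmann fidelity takes the closed form $F(\phi,\tilde\phi)=\tr[W\phi]=\tr[\tau^{1/2}]/\sqrt{d_X}$; as a useful by-product one also gets $\tr[W^2\phi]=d_X^{-1}\tr[\tau^{-1}\tau]=1$, and hence $\tr[(\openone-W)^2\phi]=2-2\tr[\tau^{1/2}]/\sqrt{d_X}$.

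The remaining step is to convert this overlap/fidelity information into the stated trace-norm bound, and this is where I expect the real difficulty to lie. Naive applications of the pure-state identity $\||\Psi\rangle\langle\Psi|-|\tilde\Psi\rangle\langle\tilde\Psi|\|_1=2\sqrt{1-|\langle\Psi|\tilde\Psi\rangle|^2}$ combined with contractivity of $\|\cdot\|_1$ under the partial trace over $R$ — or, equivalently, a direct invocation of Fuchs--van de Graaf on $F(\phi,\tilde\phi)$ — are \emph{not} refined enough to deliver the precise constant claimed, since they treat the positive and negative parts of the difference symmetrically. To reach the stated constant I would instead exploit the very specific structure of the pair $(\phi,\tilde\phi)$: $\tilde\phi$ is obtained from $\phi$ by a positive two-sided filter that is trivial on $\Y$, which lets one write $\phi-\tilde\phi = (\openone-W)\phi + W\phi(\openone-W)$ and bound each Hermitian piece directly in trace norm using the Cauchy--Schwarz-type inequality $\|A\phi B\|_1\leq\sqrt{\tr[A^*A\phi]}\sqrt{\tr[B^*B\phi]}$ and the positivity of $\phi$. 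Plugging in the two identities $\tr[W\phi]=\tr[\tau^{1/2}]/\sqrt{d_X}$ and $\tr[W^2\phi]=1$ then collapses the estimate into a function of $\tr[\tau^{1/2}]^2/d_X$ alone; the delicate bookkeeping of this last step, which extracts the sharp constant rather than a factor-of-two-looser version of it, is the technical crux I would focus on.
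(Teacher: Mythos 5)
Your first identity and your fidelity computation are both correct, and up to that point you are on the same track as the paper: the paper also evaluates the (squared) fidelity in closed form, $F(\phi,\tilde\phi)=\bigl(\tr\sqrt{\phi^{1/2}\tilde\phi\,\phi^{1/2}}\bigr)^2=\tr[\tau^{1/2}]^2/d_X$, using exactly your observation that $\sqrt{\phi}\,W\sqrt{\phi}\geq 0$ for $W=d_X^{-1/2}(\tau^{-1/2}\otimes\openone_\Y)$, and then converts fidelity into trace distance in one line.

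The gap is the step you defer, and it cannot be closed. You are right that Fuchs--van de Graaf, $\tfrac12\|\rho-\sigma\|_1\leq\sqrt{1-F}$ with $F$ the squared fidelity, only yields $\|\phi-\tilde\phi\|_1\leq 2\sqrt{1-\tr[\tau^{1/2}]^2/d_X}$, a factor of two weaker than the statement. But no amount of ``delicate bookkeeping'' will recover the stated constant, because the stated inequality is false in general: take $d_Y=1$ and $\phi$ close to a pure state on $\X$ with $d_X=2$; then $\tau=\phi$, $\tilde\phi=\openone_\X/2$, the left-hand side tends to $1$, while $\tr[\tau^{1/2}]^2\to 1$ makes the right-hand side tend to $\sqrt{1/2}$. (For nearly pure $\phi$ the pure-state identity shows $2\sqrt{1-F}$ is essentially tight, so the factor of two is not an artifact of a crude estimate.) Your proposed Cauchy--Schwarz split $\phi-\tilde\phi=(\openone-W)\phi+W\phi(\openone-W)$, combined with $\tr[W\phi]=\tr[\tau^{1/2}]/\sqrt{d_X}$ and $\tr[W^2\phi]=1$, gives $\|\phi-\tilde\phi\|_1\leq 2\sqrt{2}\,\sqrt{1-\tr[\tau^{1/2}]/\sqrt{d_X}}$, which near fidelity one coincides with the factor-of-two-loose bound rather than improving on it. The paper's own proof obtains the stated constant only by quoting the relation as $\|\rho-\sigma\|_1\leq\sqrt{1-F(\rho,\sigma)}$, i.e., with the $\tfrac12$ normalization of the trace distance silently dropped; the honest conclusion of your (and its) computation is the bound with the extra factor of $2$. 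This is harmless downstream --- it only rescales the constant $c$ in Lemma~\ref{sm:lemma:tracepreserving} and $\kappa$ in Theorem~\ref{sm:thm:mainresult} --- so you should state and prove the factor-of-two version rather than hunt for a sharpening that does not exist.
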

\begin{proof}[Proof of Lemma~\ref{sm:lemma:trace_distance}]
	Using $\|\rho-\sigma\|_1\leq\sqrt{1-F(\rho,\sigma)}$, where $F(\rho,\sigma)=(\tr\sqrt{\rho^{1/2}\sigma\rho^{1/2}})^2$ and computing the fidelity we have
	\begin{align}
	F(\phi,\tilde\phi)&=\left(\tr\sqrt{\phi^{1/2}\, \tilde \phi\, \phi^{1/2}}\right)^2\nonumber\\
	&=\frac{1}{d_X}\left(\tr\sqrt{\phi^{1/2}\,  (\tau^{-1/2}\otimes \openone_{\Y})\, \phi\, (\tau^{-1/2}\otimes \openone_{\Y} )\, \phi^{1/2}}\right)^2\nonumber\\
	&=\frac{1}{d_X}\left(\tr\sqrt{\phi^{1/2}\,  (\tau^{-1/2}\otimes \openone_{\Y})\, \phi^{1/2}\,\phi^{1/2}\, (\tau^{-1/2}\otimes \openone_{\Y} )\, \phi^{1/2}}\right)^2\nonumber\\
	&=\frac{1}{d_X}\left(\tr\sqrt{\left(\phi^{1/2}\,  (\tau^{-1/2}\otimes \openone_{\Y})\, \phi^{1/2}\right)^2}\right)^2\nonumber\\
	&=\frac{1}{d_X}\tr[\phi^{1/2}\,  (\tau^{-1/2}\otimes \openone_{\Y})\, \phi^{1/2}]^2\nonumber\\	
	&=\frac{1}{d_X}\tr[\phi\, (\tau^{-1/2}\otimes \openone_{\Y})]^2\nonumber\\	
	&=\frac{1}{d_X}\tr_\X[\tau^{-1/2}\,\tr_{\Y}[\phi]]^2\nonumber\\
	&=\frac{1}{d_X}\tr[\tau^{-1/2}\,\tau]^2\nonumber\\
	&=\frac{1}{d_X}\tr[\tau^{1/2}]^2.
	\end{align}
	This concludes the proof.
\end{proof}

\end{document}